\newenvironment{manualrestate}[1]{%
  \manualrestateinner
}{\endmanualrestateinner}
\newcommand{\ifrac}[2]{\mathchoice{\frac{#1}{#2}}{#1/#2}{#1/#2}{#1/#2}}
\newcommand{\diag}{\operatorname{diag}}
\newcommand{\tr}{^{\mathsf{T}}}
\newcommand{\argmin}{\operatorname{arg\,min}}
\newcommand{\Otilde}{\widetilde{\mathcal{O}}}
\newcommand{\dotp}{\boldsymbol{\cdot}}
\newcommand{\R}{\mathbb{R}}
\newcommand{\E}{\mathbb{E}}
\newcommand{\eqdef}{\triangleq}
\newcommand{\simplex}{\mathcal{S}}
\newcommand{\grad}{\nabla}
\newcommand{\W}{\mathcal{W}}
\newcommand{\regret}{\mathcal{R}_T}
\newcommand{\phalf}{p_t^+}
\newcommand{\phalfi}[1]{p^+_{t,{#1}}}
\newcommand{\phalfclique}[1]{p_t^+(\clique_{#1})}
\newcommand{\phalfcliqueminus}[2]{p_t^+ \brk*{\clique_{#1} \!\setminus\! {#2}}}
\newcommand{\whalf}{w_t^+}
\newcommand{\Fhalf}{F_t^+}
\newcommand{\simplexftrl}{\mathcal{S}^\gamma_N}
\newcommand{\corell}{\ell}
\newcommand{\estell}{\smash{\widehat{\ell}}}
\newcommand{\iidell}{\smash{\tilde{\ell}}}
\newcommand{\clique}{V}
\newcommand{\covnum}{\theta}
\newcommand{\pclique}[1]{p(\clique_{#1})}
\newcommand{\ptclique}[2]{p_{#1}(\clique_{#2})}
\newcommand{\ptcliquei}[2]{p_{#1} \brk*{\clique({#2})}}
\newcommand{\Deltaarm}[1]{\delta_{#1}}
\newcommand{\Deltaclique}[1]{\Delta_{#1}}
\newcommand{\neighbors}{\mathcal{N}}
\newcommand{\sumclique}{\mathchoice
    {\sum_{\mathclap{k : \Deltaclique{k} > 0}}\;} 
    {\sum_{k : \Deltaclique{k} > 0}}
    {\sum_{k : \Deltaclique{k} > 0}}
    {\sum_{k : \Deltaclique{k} > 0}}
}
\newcommand{\sumcliqueneq}{\mathchoice
    {\sum_{\mathclap{k \neq k^\star}}} 
    {\sum_{k \neq k^\star}}
    {\sum_{k \neq k^\star}}
    {\sum_{k \neq k^\star}}
}
\newcommand{\deltasum}[1]{\sumclique\ifrac{#1}{\Deltaclique{k}}}
\newtheorem{theorem}{Theorem}
\newtheorem{lemma}{Lemma}
\newtheorem*{theorem*}{Theorem}
\title{Best-of-All-Worlds Bounds for\\ Online Learning with Feedback Graphs}
\author{%
Liad Erez\footnote{
Blavatnik School of Computer Science,
Tel Aviv University;
\texttt{liaderez1@gmail.com}.}
\and
Tomer Koren\footnote{
Blavatnik School of Computer Science,
Tel Aviv University, and Google Research;
\texttt{tkoren@tauex.tau.ac.il}.}
}
\begin{document}
\maketitle

\begin{abstract}
    We study the online learning with feedback graphs framework introduced by \citet{mannor2011bandits}, in which the feedback received by the online learner is specified by a graph $G$ over the available actions. 
    We develop an algorithm that simultaneously achieves regret bounds of the form: 
    $\smash{\mathcal{O}(\sqrt{\covnum(G) T})}$ with adversarial losses;
    $\mathcal{O}(\covnum(G)\operatorname{polylog}{T})$ with stochastic losses; 
    and $\mathcal{O}(\covnum(G)\operatorname{polylog}{T} + \smash{\sqrt{\covnum(G) C})}$ with stochastic losses subject to $C$ adversarial corruptions.
    Here, $\covnum(G)$ is the \textit{clique covering number} of the graph $G$.
    Our algorithm is an instantiation of Follow-the-Regularized-Leader with a novel regularization that can be seen as a product of a Tsallis entropy component (inspired by \citet{zimmert2019optimal}) and a Shannon entropy component (analyzed in the corrupted stochastic case by \citet{amir2020prediction}), thus subtly interpolating between the two forms of entropies.
    One of our key technical contributions is in establishing the convexity of this regularizer and controlling its inverse Hessian, despite its complex product structure.
\end{abstract}

\section{Introduction}

% \tk{Online learning; stochastic and adversarial; full-info and MAB; regret}

Online learning models a repeated interaction between a learner and an environment. In this framework, the learner has to choose an arm (or action) from a set of $N$ arms, iteratively across $T$ rounds. After choosing an arm the learner incurs an associated loss and receives feedback from the environment. 
The feedback the learner receives can range between the \emph{full-information} setting where after choosing an action the learner sees the losses of all $N$ arms, and the \emph{multi-armed bandit} (MAB) setting where only the loss of the arm chosen at the current round is revealed. 
% Naturally one can think about intermediate settings where the learner receives partial feedback which includes a subset of the losses, which is our focus in this paper. 
Traditionally, the way the losses are generated is either in an adversarial manner (i.e., can be arbitrary in each round), or stochastically (i.e., sampled i.i.d.~from an unknown distribution).

Recently, online algorithms achieving best-of-both-worlds guarantees have drawn
significant attention: these are algorithms that are able to perform optimally
in both the stochastic and adversarial settings, without prior knowledge of the
regime and its parameters. Recent advances in this line of work
\citep{amir2020prediction,zimmert2019optimal,jin2020simultaneously,zimmert2019beating,lee2021achieving} have resulted with algorithms that achieve a remarkable \emph{best-of-all-worlds} guarantee: in a general intermediate adversarially-corrupted regime, where losses are
generated stochastically but then may be modified by an adversary in an
arbitrary way, these algorithms attain regret that grows with the square-root of
the total amount of corruption introduced. Even more, these guarantees
significantly outperform those obtained by specialized algorithms designed
exclusively for the corrupted stochastic setting
\citep{lykouris2018stochastic,gupta2019better,lykouris2019corruption}.

Our goal in this work is to extend these best-of-all-worlds results from the
full-information and MAB settings to more general online learning problems and
feedback models. We focus on the general framework of online learning with
graph-structure feedback, originally introduced by \citet{mannor2011bandits} and
studied extensively since (e.g.,
\cite{alon2015online,cohen2016online,kocak2016online,caron2012leveraging,lu2021corrupted,lu2021stochastic,hu2020problem}). In this model,
feedback is described by an undirected graph $G$ over the arms; the set of
observations received following each prediction is comprised of the loss of the
chosen arm as well as the loss of all of its neighbors in $G$.
The feedback graphs framework nicely interpolates between online learning with
full-information and learning with bandit feedback, and generalizes to settings
with more complex side-observation structure. Full-information online learning
is captured as an extreme case where the graph $G$ is the clique over the $N$
arms; on the other extreme, an empty feedback graph that contains no edges
corresponds to the multi-armed bandit problem.%
% \footnote{In fact, the feedback graphs framework generalizes to settings that go
% beyond the bandit feedback model \tk{cite}, which are beyond the scope of this
% paper.}

The optimal regret in the feedback graphs setting is well understood, both in
the adversarial and stochastic regimes. In the former, the state-of-the-art is
obtained by \citet{alon2015online} who gave an algorithm with regret
$\smash{\mathcal{O}(\sqrt{\alpha(G) T})}$, where $\alpha(G)$ is the independence number of
$G$, and also established its optimality up to log factors. In the latter
regime, \citet{cohen2016online} proposed an algorithm whose regret is roughly
$\mathcal{O}(\alpha(G)\log{T})$, which is again nearly optimal. However, it remains
unclear whether there exists a single algorithm that attains both bounds
simultaneously, without prior knowledge of the regime, considering that the
existing optimal algorithms in the two settings are inherently different. Of
course, one can always ignore the additional feedback and use an existing
best-of-all-worlds algorithm for MAB; however, this would result with a direct
dependence on the number of arms $N$. The challenge is then to improve this
dependence by exploiting the structure of the feedback graph.

\subsection{Our contributions}

We make a significant step towards obtaining a best-of-all-worlds guarantee in
the feedback graphs model, by giving the first algorithm in this setting that
achieves a uniform best-of-all-worlds guarantee and improves over the existing
MAB bounds in a non-trivial way. 
Our algorithm achieves regret of the form $\covnum(G)\mathrm{polylog}(T)$ when
the losses are stochastic i.i.d., and $\smash{\Otilde(\sqrt{\covnum(G) T})}$ when the
losses are fully adversarial; here $\covnum(G)$ denotes the \emph{clique covering
number} of the graph $G$.\footnote{The clique covering number of a graph is the
minimum number of cliques that cover its set of vertices; without loss of
generality, we can assume that the latter set of cliques is a partition.} 
Furthermore, in an intermediate adversarially-corrupted stochastic setting,
where the losses are generated stochastically but then may be corrupted by an
adversary, we obtain a bound of the form $\covnum(G)\mathrm{polylog}(T) +
\smash{\Otilde(\sqrt{\covnum(G) C})}$ with $C$ being the total amount of corruption, that
smoothly interpolates between these two extremes.
Crucially, the algorithm achieves the aforementioned bounds simultaneously and
agnostically, in the sense that it need not be aware of the actual underlying
loss generation process, and in particular, of the actual level of adversarial
corruption introduced.

To state our results more concretely, let $V_1,\ldots,V_K$ be a minimum clique
cover of $G$ (here $K = \covnum(G)$), and denote by $\Delta_k$ the minimal gap of
a suboptimal arm in $V_k$; namely, the difference in mean losses (with respect
to the stochastic uncorrupted losses) between an arm in $V_k$ and the best arm
whose mean loss is minimal. Finally, define $Z = \deltasum{1}$ (this is the
quantity governing the complexity of the learning problem in the purely
stochastic case; see \cite{cohen2016online}). Then, our main result can be
stated as:

\begin{theorem*}[main, informal]
There exists an algorithm (see \cref{alg:FTRL} in \cref{sec:algorithm}) whose expected regret in the $C$-corrupted stochastic setting is bounded by
$
    \smash{\Otilde\brk{\min\set{\sqrt{KT}, Z + \sqrt{CZ}}}}
    .
$
In particular, in the stochastic setting ($C=0$) the expected regret of the algorithm is at most 
$
    \smash{\Otilde\brk{\deltasum{1}}}
    ,
$ 
while in the adversarial setting ($C=T$) its expected regret is bounded by 
$
    \smash{\Otilde\brk{\sqrt{KT}}}
    .
$
\end{theorem*}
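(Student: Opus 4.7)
The plan is to analyze \cref{alg:FTRL} through the standard Follow-the-Regularized-Leader regret decomposition, exploiting the product regularizer $F_t = \phi_t \cdot \psi_t$ of a Tsallis and a Shannon entropy so that a single analysis recovers both target bounds. A first simplification is the clique-cover reduction: whenever an arm in $V_k$ is played, the feedback reveals the losses of every arm in $V_k$, so the observation probability of any $i \in V_k$ equals the clique mass $\phalfclique{k} = \sum_{j \in V_k} \phalfi{j}$. Accordingly, the importance-weighted estimators $\estell_{t,i}$ are unbiased with second moment scaling like $1/\phalfclique{k}$, effectively reducing the ambient dimension from $N$ to $K = \covnum(G)$.

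The FTRL regret identity then bounds the pseudo-regret against any simplex comparator $u$ by (i) a stability term $\sum_t \estell_t^{\mathsf{T}} [\nabla^2 F_t(\phalf)]^{-1} \estell_t$ and (ii) a regularization-growth term $\sum_t (F_{t+1}(u) - F_t(u))$. The main technical obstacle, already flagged in the abstract, is controlling the inverse Hessian of the product regularizer: since
\begin{equation*}
\nabla^2(\phi_t \psi_t) \;=\; \psi_t\,\nabla^2 \phi_t \;+\; \phi_t\,\nabla^2 \psi_t \;+\; \nabla\phi_t\,(\nabla\psi_t)^{\mathsf{T}} \;+\; \nabla\psi_t\,(\nabla\phi_t)^{\mathsf{T}},
\end{equation*}
positive-definiteness is not automatic, and the rank-one cross terms couple coordinates within each clique even though the individual factors are separable over cliques. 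My approach would be to work clique by clique and invert the local block via a Sherman--Morrison / matrix-inversion-lemma argument, showing that the cross contributions can be absorbed so that $[\nabla^2 F_t]^{-1}$ admits a diagonal upper bound whose entries match the Tsallis inverse Hessian up to a multiplicative $\mathrm{polylog}(T)$ factor contributed by the Shannon component.

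Equipped with such a bound, the per-round stability is controlled by roughly $\eta_t \sum_k \sqrt{\phalfclique{k}}$ up to polylogs, and the adversarial $\Otilde(\sqrt{\covnum(G)\, T})$ bound follows by summing over $t$ and telescoping the regularization term, exactly as in $1/2$-Tsallis FTRL on a $K$-simplex. For the corrupted stochastic bound, I would follow the self-bounding strategy of \citet{zimmert2019optimal}: the same stability expression, written in terms of $\sqrt{\phalfclique{k}}$, can be paired via AM--GM with the stochastic regret lower bound $\sum_{t,k} \Deltaclique{k}\,\phalfcliqueminus{k}{k^\star}$ to extract a factor of the regret itself, leaving the $\Otilde(\deltasum{1})$ term. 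The additional bias introduced by $C$ rounds of corruption then contributes the $\sqrt{C\,\deltasum{1}}$ term as in \citet{amir2020prediction}, where the Shannon factor of the regularizer is exactly what makes the corruption-robust step go through. Taking the minimum of the two resulting upper bounds yields the claimed $\Otilde(\min\set{\sqrt{\covnum(G) T},\ \deltasum{1} + \sqrt{C\,\deltasum{1}}})$ guarantee, uniformly across regimes.

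I expect the Hessian-inversion step to be the main obstacle by a wide margin: it is precisely where the Tsallis/Shannon interpolation pays off analytically, but it is also where the combined product structure demands new algebra beyond what either single-entropy analysis offers in isolation.
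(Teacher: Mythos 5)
Your high-level architecture matches the paper's: FTRL with a product-type Tsallis--Shannon regularizer, the clique-cover reduction making the estimator variance scale with $1/\ptclique{t}{k}$, a stability-plus-penalty decomposition, and the self-bounding/Young's-inequality step of \citet{zimmert2019optimal} to convert the $\sum_{t,k}\sqrt{\E[\ptclique{t}{k}]/t}$ bound into the stochastic and corrupted bounds. However, there is a genuine gap at exactly the step you flag as the main obstacle. The plain product of a Tsallis component and a Shannon component is \emph{not convex} (see \cref{fig:3d}), so your plan to invert the Hessian block-by-block via Sherman--Morrison and absorb the rank-one cross terms cannot work as stated: the local Hessian blocks can have negative eigenvalues, so no PSD diagonal upper bound on the inverse exists, with or without polylog slack. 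The missing idea is the linear shift $\psi_\alpha(y) = y\log y - \alpha y$ with $\alpha = \Theta(\log^2(NT))$ applied to the Shannon factor; with this shift the paper proves (\cref{lem:amazing2}, via a completing-the-square decomposition of the Hessian into a sum of explicit PSD rank-one terms, not matrix inversion) the clean lower bound $\nabla^2\Psi_\alpha(p) \succeq \tfrac12\diag(1/(p_i\sqrt{p(V(i))}))$ with no logarithmic loss. This shift is the paper's central technical contribution, and without it (or an equivalent device) your stability bound does not get off the ground.

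Two further ingredients are absent from your sketch and are needed for the argument to close. First, the stability term must be computed for the \emph{shifted} estimators $\estell_t - \ell_{t,i^\star}\mathbf{1}$; without this shift the bound retains a $\sqrt{\E[\ptclique{t}{k^\star}]}$ term for the optimal clique, which the self-bounding step cannot absorb (the pseudo-regret lower bound $\sum_{t,i}\E[p_{t,i}]\Deltaarm{i}$ carries no contribution from $i^\star$). Second, the paper augments the regularizer with a log-barrier over the clique marginals, $\Phi(p) = -\beta\sum_k\log p(\clique_k)$, precisely to guarantee $\phalfclique{k}\le \tfrac73\ptclique{t}{k}$ (\cref{lem:q-stability-new}); this multiplicative stability is what lets one evaluate the local norm at the intermediate point $\tilde p_t$ and replace it by $p_t$ up to constants. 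Your sketch implicitly assumes such stability when writing the per-round stability as $\eta_t\sum_k\sqrt{\phalfclique{k}}$, but does not provide a mechanism for it; a per-arm log-barrier would cost an additive $\mathcal{O}(N)$, so the clique-marginal variant is needed.
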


The above results leverage the graph structure in a non-trivial way and improve
the direct dependence on the number of arms $N$ in existing best-of-all-worlds
results to a dependence on $\covnum(G)$, which is potentially significantly
smaller than $N$. On the other hand, our bounds do not feature the optimal
dependence on the independence number $\alpha(G)$; we leave it as a main open
question whether it is possible to obtain a best-of-all-worlds result with
$\alpha(G)$ replacing $\covnum(G)$ in the bounds. We remark however, as we
discuss in more detail below, that obtaining bounds scaling with the
weaker~$\covnum(G)$ is already a highly non-trivial task and so we believe that
our results form a significant step towards obtaining optimal best-of-all-worlds
bounds. Another question we leave open is whether our dependence on~$T$ can be
improved to $\mathcal{O}(\log{T})$ and $\smash{\mathcal{O}(\sqrt{T})}$ in the stochastic and
adversarial cases respectively, removing a few excess logarithmic factors in our
bounds.

\subsection{Overview of main ideas and techniques}

Our approach is inspired by recent progress in obtaining best-of-all-worlds
guarantees in the two extreme cases of our problem: multi-armed bandits
\cite{zimmert2019optimal} and full-information experts
\cite{amir2020prediction}. Somewhat surprisingly, it has been shown in both
cases that this type of guarantee can be obtained by simple algorithms based on
the canonical Follow-the-Regularized-Leader template, traditionally used
exclusively in adversarial online learning. Our primary technical challenge
therefore boils down to designing a convex regularizer given the feedback graph
$G$, that carefully interpolates between the well-understood regularization
schemes in the full-information and bandit cases.

In the full-information case, best-of-all-worlds results
\cite{amir2020prediction} rely on (a time-varying version of) the negative
Shannon entropy 
$
    R(p) 
    = 
    \smash{ \sum_{i=1}^N p_i \log{p_i} }
$ 
as regularization. In a nutshell, its crucial properties are an $\mathcal{O}(\log{N})$ uniform upper bound on its
magnitude over the probability simplex, and on the other hand, that its Hessian
$\nabla^2 R(p) = \smash{\diag(p_1^{-1},\ldots,p_N^{-1})}$ satisfies $\ell\tr
(\nabla^2 R(p))^{-1} \ell = \mathcal{O}(1)$ for any loss vector $\ell \in [0,1]^N$. (We omit more technical details on third-order conditions and self-bounding
properties from this informal discussion.)
In the bandit case, on the other hand, the best-of-all-worlds result of
\cite{zimmert2019optimal} relies on a Tsallis entropy (with parameter
$\alpha=1/2$) regularizer 
$
    R(p) 
    = 
    \smash{-\sum_{i=1}^N \sqrt{p_i}}
    ,
$ 
originally proposed in the context of MAB by \cite{audibert2009minimax}. This form of entropy is
bounded uniformly over the simplex by $\smash{\mathcal{O}(\sqrt{N})}$, and its Hessian
$
    \smash{\nabla^2 R(p)}
    = 
    \smash{\tfrac14\diag(p_{\smash{1}}^{-3/2},\ldots,p_{\smash{N}}^{-3/2})}
$ 
satisfies
$
    \E\brk[s]{\tilde\ell\tr (\nabla^2 R(p))^{-1} \tilde\ell} 
    = 
    \smash{\mathcal{O}(\sqrt{N})}
$ 
for loss estimators $\tilde\ell$ used in MAB. In both cases, the regularizer
strikes the ``right'' balance between the two bounds, which is necessary for
optimal regret.

Moving to the more complex feedback model induced by a graph, it is natural to
expect that an appropriate regularization would emerge as a certain combination of the
Shannon and Tsallis entropies. For simplicity, consider a simple graph formed as
a union of $K$ disjoint cliques $V_1,\ldots,V_K$, and let $p$ be a probability
vector over its vertices. Then, we are looking for a regularization that would
treat the marginal clique probabilities $q_k = \sum_{i \in V_k} p_i$ as a Tsallis entropy,
while behaving within a clique (with respect to the internal conditional
probabilities $p_i/q_k$ for $i \in V_k$) like the negative Shannon entropy.
Inspecting the technical conditions more deeply, it turns out that we seek a
regularizer $R$ such that the Hessian at $p$ is lower bounded by a diagonal
matrix with the $i$'th diagonal entry corresponding to $i \in V_k$ is
$
    \smash{p_{\smash{i}}^{-1} q_{\smash{k}}^{-1/2}}
    .
$
A natural candidate would be simply the sum of the Shannon and Tsallis entropies; unfortunately, this regularization lacks the bi-level structure we need, and indeed, its Hessian does not admit the desired lower bound.

Our main technical innovation is the design of a new convex regularizer that admits the aforementioned lower bound on its Hessian, and on the other hand, is bounded by $\mathcal{O}(\smash{\sqrt{K}\log{N}})$ over the simplex. Interestingly, we find that the necessary conditions are met for a regularizer produced by the \emph{product} of a negative Tsallis entropy (over the marginals $q_k$) and a negative Shannon entropy (over the conditional $p_i/q_k$), rather than their sum.
Of course, such a product need not be a convex function in general, and indeed, the regularization we just described fails to be convex even in simple cases (see \cref{fig:3d}). Nevertheless, it turns out that there is a simple fix that makes this bi-level regularization a convex function: a simple linear shift to the Shannon entropy component. Not only that, but the resulting regularizer also admits the stronger Hessian lower bound we require. 
We refer to this new regularizer by \emph{Tsallis-Shannon entropy}, and discuss the details of its derivation in \cref{sec:tsallis-shannon}; the remainder of the  development takes more standard lines and is described in \cref{sec:algorithm,sec:corrupted-bound-proof}.

\subsection{Additional related work}

The online learning with feedback graphs framework we consider here was introduced by \citet{mannor2011bandits}. A tight characterization of the minimax regret in this model in the adversarial case was established by \cite{alon2017nonstochastic,alon2015online}.
The setting was first considered in the stochastic setting by \citet{caron2012leveraging}, and was followed by numerous subsequent papers \citep{cohen2016online,hu2020problem,lu2021stochastic}. \citet{cohen2016online} considered a setting in which the feedback can change in each time step, and the sequence of feedback graphs is not known to the algorithm, and \citet{lu2021stochastic} considered a setting where the loss distribution can change across rounds.
We remark that some of the mentioned work (most notably~\cite{alon2015online,cohen2016online}) analyze directed feedback graphs which are more general than the undirected graphs we consider here, and moreover, do not necessarily contain self-loops. Such models currently lie beyond our scope, but we believe that our regularization techniques should be extensible to these variants too. 

The line of work on best-of-both-worlds algorithms in the context of MAB was initiated by \citet{bubeck2012best}. Their result was subsequently improved in a sequence of papers \citep{auer2016algorithm,seldin2014one,wei2018more}, culminating in the remarkably elegant recent result of \cite{zimmert2019optimal}.
A number of tricks and techniques we use are borrowed (and sometimes extended)
from this line of work, including refined regret bounds for FTRL with local
norms \citep{amir2020prediction,zimmert2019optimal,jin2020simultaneously}, shifted loss estimators and self-bounding regret
\citep{wei2018more,zimmert2019optimal,amir2020prediction}, and augmented
log-barrier regularization for inducing stability
\citep{bubeck2018sparsity,lee2020closer,jin2020simultaneously}.
The adversarially corrupted stochastic setting had also received considerable attention in recent years across numerous online learning frameworks \citep{lykouris2018stochastic,lykouris2019corruption,gupta2019better,jun2018adversarial,kapoor2019corruption,liu2019data,zimmert2019optimal,jin2020simultaneously}, including a paper by \citet{lu2021corrupted} who analyzed this setting in the bandits with graph feedback framework and showed a regret bound which generalizes naturally to the stochastic setting, but not to the adversarial setting. Thus the question of whether it is possible to obtain regret bounds which generalize to best-of-both-worlds type guarantees in the feedback graphs framework, was not resolved in these papers.

Finally, we note that the idea of using a hybrid regularization was suggested in a number of previous works. \citet{bubeck2018sparsity,bubeck2019improved} explored hybrid regularizers in the context of multi-armed bandits. More recent works include \citet{zimmert2019beating} who used a hybrid regularization closely related to Tsallis entropy presented in \citet{zimmert2019optimal} to obtain best-of-both-worlds type guarantees in the semi-bandit setting. Similarly, \citet{jin2020simultaneously} use a form of a hybrid regularizer also related to Tsallis entropy in order to obtain similar guarantees for episodic reinforcement learning.
However, to the best of our knowledge, our idea and analysis of a multi-level regularization that mixes together distinct types of entropies, were not previously explored.

\section{Preliminaries} 
\label{sec:prelims}

We consider an online learning problem where the learner has to repeatedly
choose an arm from a set of $N$ arms (or actions) indexed by $[N] =
\brk[c]{1,2,...,N}$. In each time step $t = 1,2,...,T$ the learner generates a
probability vector $p_t$ from the $N$-dimensional simplex $\simplex_N =
\brk[c]{p \in \R_+^N : \smash{\sum_{i=1}^N} p_i = 1}$ and then chooses an arm $I_t \sim
p_t$. Thereafter, a loss vector $\ell_t \in [0,1]^N$ is generated, the learner
incurs the loss $\ell_{t,I_t}$, and feedback is revealed.

\paragraph{Feedback model:}

The feedback received by the learner at each round $t$ is specified by an
undirected \emph{feedback graph} $G = \brk{[N], E}$, known to the learner in
advance, and is comprised of $\brk[c]{(i,\ell_{t,i}) : i \in \neighbors(I_t)}$,
where $\neighbors(j)$ denotes the set of neighbors of node $j$ in $G$ (which we
assume always to contain $j$ itself). Moreover, we assume that the learner
receives as input a minimum \textit{clique covering} of $G$, i.e., a minimum
cardinality collection of cliques $\set*{\clique_1,\clique_2,...,\clique_K}$
which partitions and spans the vertices of $G$, meaning each $i \in [N]$ belongs
to some unique clique $\clique_k \subseteq [N]$. The number of cliques $K$ in
this minimum clique covering is called the \textit{clique covering number} of
$G$, which we denote by $\covnum(G)$.%
\footnote{Computing a minimum clique covering in general graphs is a well-known NP-hard problem; we therefore assume it is given as an input, to avoid dealing with such computational efficiency issues.}

\paragraph{Assumption on losses:}

We consider a general \emph{adversarially-corrupted stochastic} setting (originally introduced in \cite{lykouris2018stochastic}) that includes stochastic and adversarial online learning as special cases, as well as a range of intermediate problems.
In this setting, loss vectors $\iidell_1,\iidell_2,...,\iidell_T$ are first
drawn i.i.d.~from a fixed probability distribution, unknown to the learner. We
denote the mean loss vector by $\mu = (\mu_1,\mu_2,...,\mu_N) = \E[\iidell_t]$,
and let $i^\star = \argmin_i \mu_i$ the best arm, which we assume to be unique.
We further let $\clique_{k^\star} = \clique(i^\star)$ and denote by
$\Deltaarm{i} = \mu_i - \mu_{i^\star}$ the gap between arm $i$ and the best arm,
and $\Deltaclique{k} = \min_{i \in \clique_k, i \neq i^\star} \Deltaarm{i}$ the
minimal gap of a suboptimal arm which belongs to~$\clique_k$ (here, if
$V(i^\star)$ is a singleton then we interpret $\Deltaclique{k^\star} = 0$).
After the stochastic loss vectors $\iidell_1,\ldots,\iidell_T$ have been
generated, an adversary is allowed to corrupt them and form a final loss
sequence $\corell_1,\ldots,\corell_T \in [0,1]^N$. The overall (expected)
\emph{corruption level} introduced by the adversary is defined as
$$
    C 
    = 
    \E\brk[s]*{ \sum_{t=1}^T \norm{\corell_t - \iidell_t}_\infty }
    .
$$
Importantly, we assume that this quantity is unknown to the learner, and the
algorithms we design will be oblivious to the value of $C$.

\paragraph{Regret:}

The goal of the learner in the adversarially-corrupted stochastic setting is to minimize the \textit{pseudo-regret}, defined as follows:
\begin{align*}
    \regret 
    \eqdef 
    \E\brk[s]*{ \sum_{t=1}^T \corell_{t,I_t} } - \min_{i \in [N]} \E\brk[s]*{ \sum_{t=1}^T \ell_{t,i} }
    ~=~
    \E\brk[s]*{ \sum_{t=1}^T p_t \cdot \corell_t } - \min_{i \in [N]} \E\brk[s]*{ \sum_{t=1}^T \ell_{t,i} }
    .
\end{align*}
Here, the expectation is over the internal randomness of the algorithm and over any randomness in the losses. 
% Note that in this setting the learner's performance is evaluated with respect to the corrupted losses rather than the uncorrupted stochastic losses. 
% 
The \emph{stochastic} setting is obtained for corruption level $C = 0$, where the pseudo-regret simplifies to 
$
    \regret
    =
    \sum_{t=1}^T \sum_{i=1}^N \E[p_{t,i}] \Deltaarm{i}
    .
$
The \emph{adversarial} setting is recovered by setting $C = T$; with an oblivious adversary (that sets the loss vectors ahead of time), the loss vectors are w.l.o.g.~deterministic and the above definition of pseudo-regret matches the usual definition of regret.

\paragraph{Additional notation:}

Given $i \in [N]$ and a partition $V_1,\ldots,V_K$ of $[N]$, we denote by $V(i)$
the (unique) partition element which $i$ belongs to, i.e., $i \in V(i)$. For a
vector $p \in \R^N$ and a set of arms $A \subseteq [N]$, we use the shorthand
$p(A) = \sum_{i \in A} p_i$.

\section{The Tsallis-Shannon Entropy}
\label{sec:tsallis-shannon}

In this section we introduce our definition of the Tsallis-Shannon entropy that
we use as a regularizer and derive the main properties we require. For that, we
first define the notion of the Tsallis-perspective of a convex function, that
will be useful for this derivation.

\subsection{Tsallis-perspective of a convex function}

Let $h : [0,1] \to \R$ be twice-differentiable and strictly convex.
We define the \emph{Tsallis-perspective} of $h$ as the following function over $\R^d_+$:
\begin{align} \label{eq:tsallis-perspective}
    H(x)
    \eqdef
    \sqrt{\norm{x}_1} \sum_{i=1}^d h\brk2{ \frac{x_i}{\norm{x}_1} }
    .
\end{align}
The name means to draw a connection to the classical notion of the perspective of a convex function~$h$, given by $g(x,t) = t h(x/t)$, which is always convex in the pair $(x,t)$; using this fact, the convexity of the function $G(x) = \norm{x}_1 \sum_{i=1}^d h\brk{ \ifrac{x_i}{\norm{x}_1} }$ immediately follows since $x \mapsto \norm{x}_1$ is linear over $\R^d_+$.
In contrast, for the similarly looking function $H$, that has the leading $\norm{x}_1$ factor under a square-root, an analogous result does not hold in general; see \cref{fig:3d} for an example.
\begin{figure}[t]
    \centering
    \includegraphics[width=0.4\linewidth]{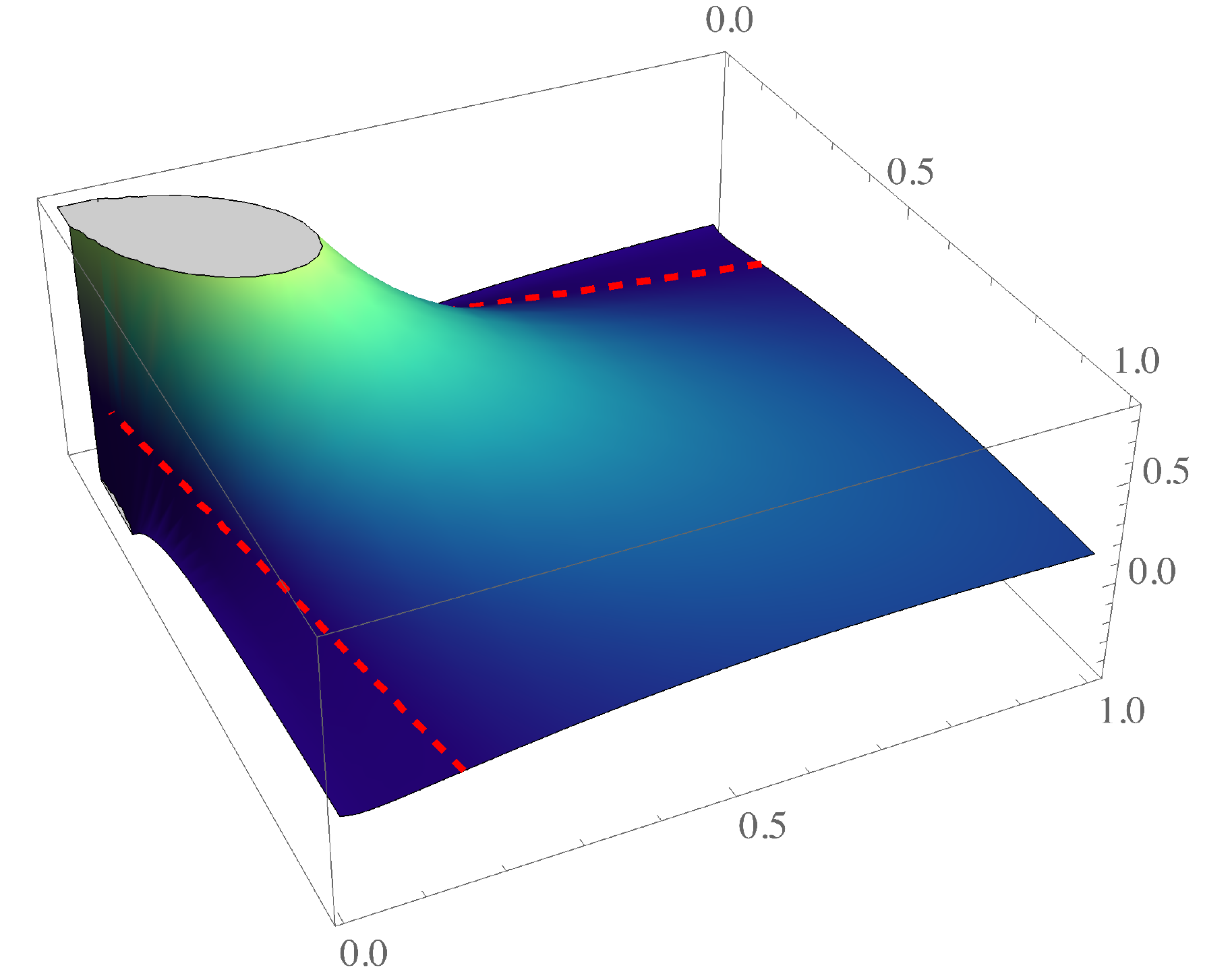}
    \qquad
    \includegraphics[width=0.4\linewidth]{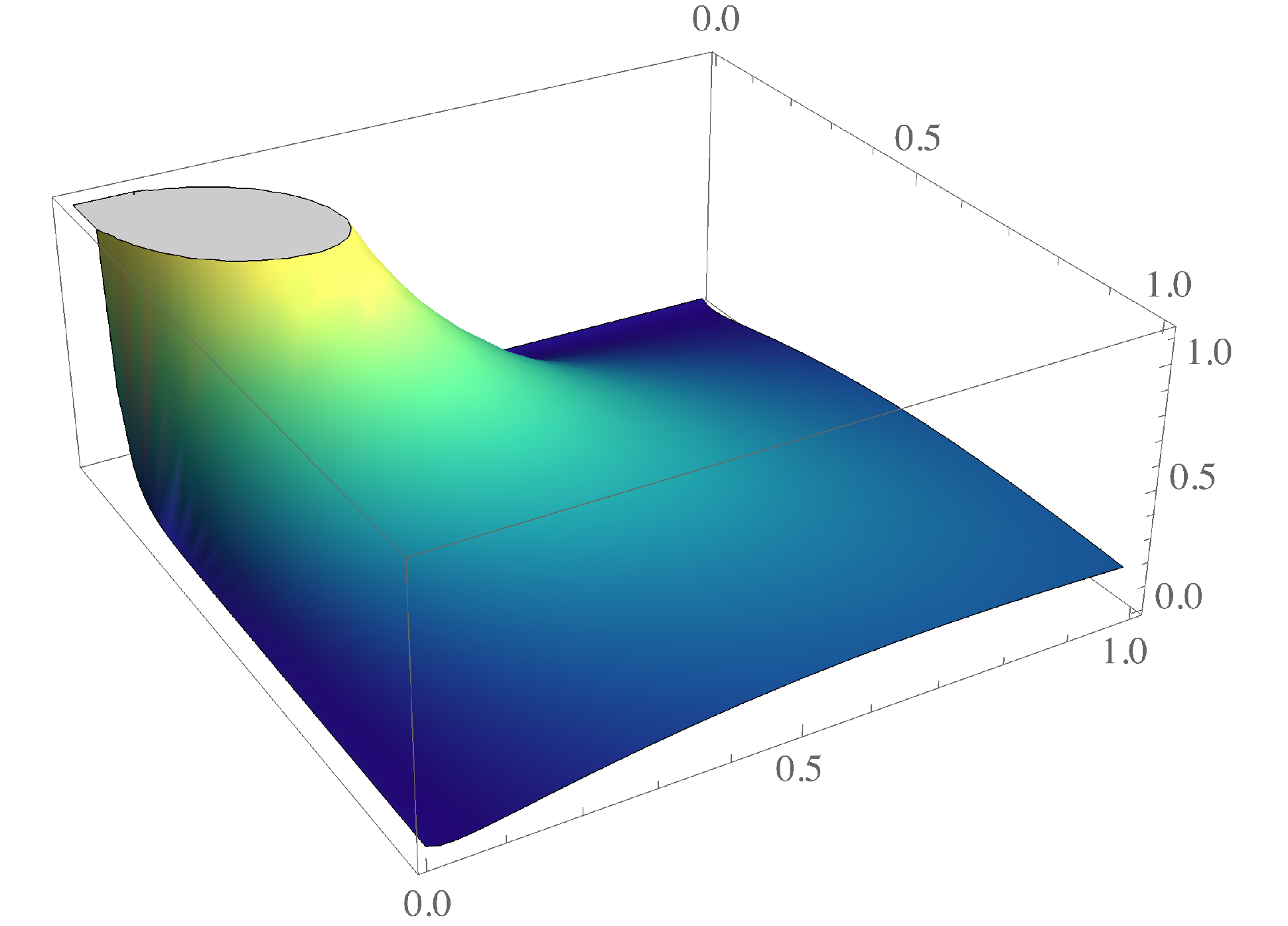}
    \caption{The minimal eigenvalue of the Hessian of the Tsallis-perspective in
    $d=2$ dimensions (see \cref{eq:tsallis-perspective}) of $h=\psi_\alpha$ (see
    \cref{eq:tsallis-shannon}) with $\alpha = 0$ (left) vs.~$\alpha = 0.25$
    (right). The dashed red lines indicate the zero level set. We see that
    convexity of $\psi_\alpha$ is insufficient to ensure that $\Psi_\alpha$ is
    even convex (let alone strongly convex), and the linear shift introduced in
    $\psi_\alpha$ is crucial for this purpose.}
    \label{fig:3d}
\end{figure}
Remarkably, however, there is a simple condition on $h$ under which it can be shown that not only $H$ is convex, but in fact admits a strong lower bound over its Hessian. This is detailed in the following lemma.

\begin{lemma} \label{lem:amazing2}
Assume that $h : [0,1] \to \R$ is twice-differentiable, strictly convex and satisfies the following condition for some constants $c_h \in \R$ and $\lambda_h > 0$, for all vectors $y \in Y \subseteq \simplex_d$:
\begin{align*}
    \sum_{i=1}^d h(y_i) + 2\sum_{i=1}^d \frac{\brk!{h'(y_i)-c_h}^2}{h''(y_i)} + \lambda_h
    \leq
    0
    % ,
    % \qquad
    % % \forall ~ p \in \Delta_d \text{ such that } p_i \geq \gamma ~ \forall i \in [d]
    % \forall ~ p \in P \subseteq [0,1]^d
    .
\end{align*}
Then, at any $x \in \R^d_+$ such that $x/\norm{x}_1 \in Y$, the function $H$ satisfies
\begin{align*}
    \nabla^2 H(x)
    \succeq
    \frac{\lambda_h}{4} \norm{x}_1^{-\frac32} J + \frac{1}{2} \norm{x}_1^{-\frac72} \sum_{i=1}^d x_i^2 h'' \brk2{\frac{x_i}{\norm{x}_1}} z_i z_i \tr,
\end{align*}
where $J$ is the $d$-by-$d$ all-ones matrix and $z_i = \mathbf{1}_d - \brk{\norm{x}_1 / x_i} \mathbf{e}_i$ for $i \in [d]$.
\end{lemma}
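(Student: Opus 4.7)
The plan is to compute $v^\top \nabla^2 H(x) v$ directly for an arbitrary direction $v \in \mathbb{R}^d$, reduce it to a compact expression in a handful of quantities, and then use a Young-type inequality together with the hypothesis on $h$ to produce the claimed lower bound. I will adopt the shorthand $s = \|x\|_1$, $y_i = x_i/s$, $u = v \cdot \mathbf{1}_d$, and $w_i = (v_i - y_i u)/s$; the latter is $y_i'(0)$ when $x$ is perturbed along $v$, and a direct calculation gives $\sum_i w_i = 0$.

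First, I would differentiate $t \mapsto H(x+tv) = \sqrt{s(t)} \sum_i h(y_i(t))$ twice at $t=0$, using $s'(t) = u$ (constant) and the identity $y_i''(t) = -2 u\, y_i'(t)/s(t)$ that follows from the chain rule. After grouping terms and exploiting $\sum_i y_i = 1$, several first- and second-order contributions cancel, and one arrives at the compact expression
\[
v^\top \nabla^2 H(x)\, v \;=\; -\frac{u^2}{4 s^{3/2}} \sum_i h(y_i) \;-\; \frac{u}{\sqrt{s}} \sum_i h'(y_i)\, w_i \;+\; \sqrt{s} \sum_i h''(y_i)\, w_i^2.
\]

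Next, since $\sum_i w_i = 0$, replacing $h'(y_i)$ by $h'(y_i) - c_h$ in the middle term leaves it unchanged. I would then apply Young's inequality $|pq| \leq \tfrac{1}{2\gamma} p^2 + \tfrac{\gamma}{2} q^2$ termwise, with $p_i = (u/\sqrt{s})(h'(y_i) - c_h)/\sqrt{h''(y_i)}$, $q_i = w_i \sqrt{h''(y_i)}$, and the scaling $\gamma = \sqrt{s}$ chosen so that the resulting $\sum_i q_i^2$ contribution equals exactly half of the third term above. This yields
\[
v^\top \nabla^2 H(x)\, v \;\geq\; -\frac{u^2}{4 s^{3/2}}\left[\sum_i h(y_i) + 2 \sum_i \frac{(h'(y_i) - c_h)^2}{h''(y_i)}\right] + \frac{\sqrt{s}}{2} \sum_i h''(y_i)\, w_i^2,
\]
and the hypothesis of the lemma then turns the first term into $\geq \lambda_h\, u^2 / (4 s^{3/2})$.

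Finally, I would translate the two surviving terms into the matrix form of the statement. The identity $u^2 = v^\top J v$ handles the first term directly. For the second, a one-line calculation gives $x_i (v^\top z_i) = u x_i - s v_i = -s^2 w_i$, hence $x_i^2 (v^\top z_i)^2 = s^4 w_i^2$ and $s^{-7/2} \sum_i x_i^2 h''(y_i)\,(v^\top z_i)^2 = \sqrt{s}\sum_i h''(y_i)\, w_i^2$, matching the coefficient in the claim. I expect the main obstacle to be the bookkeeping in the first step: the second derivative of $H$ unfolds into many pieces that only collapse after careful cancellation, and once the compact identity is in place the Young step has no slack (both constants $1/4$ and $1/2$ in the stated lower bound are tight, making the choice $\gamma = \sqrt{s}$ essentially forced). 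Everything beyond that is routine algebra.
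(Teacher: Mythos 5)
Your proposal is correct and follows essentially the same route as the paper: your compact expression for $v^\top \nabla^2 H(x) v$ is exactly the quadratic form of the paper's explicit Hessian formula (its \cref{lem:hessian-expression}), your use of $\sum_i w_i = 0$ to shift $h'$ by $c_h$ mirrors the paper's use of $\sum_i y_i z_i = 0$, and your termwise Young inequality with $\gamma=\sqrt{s}$ is precisely the scalar form of the paper's matrix completion-of-square into rank-one PSD terms. The only cosmetic difference is that you work with directional derivatives and quadratic forms rather than assembling the Hessian as a matrix first; all the identities you state (including $x_i(v^\top z_i) = -s^2 w_i$) check out.
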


\begin{proof}
Fix $x \in \R^d_+$ and let $y_i = x_i/\norm{x}_1$ for all $i$. Using \cref{lem:hessian-expression} in \cref{sec:amazing-proofs}, the Hessian of $H$ can be written as
\begin{align*}
    \nabla^2 H(x)
    =
    \frac14 \norm{x}_1^{-\tfrac32} \sum_{i=1}^d \brk2{- h(y_i) z z \tr + 4y_i^2 h''(y_i) z_i z_i \tr + 2 y_i h'(y_i) \brk!{z z_i \tr + z_i z \tr}},
\end{align*}
where $z = \mathbf{1}_d$ is the all-ones vector, and $z_i = \mathbf{1}_d - \brk{\norm{x}_1 / x_i} \mathbf{e}_i$ for all $i \in [d]$. Then, using the condition on $h$ and since $\sum_{i=1}^d y_i z_i = 0$ we have
\begin{align*}
    &\nabla^2 H(x)
    \\
    &\succeq
    \frac{\lambda_h}{4} \norm{x}_1^{-\tfrac32} J 
    + \frac14 \norm{x}_1^{-\tfrac32} \sum_{i=1}^d \brk*{ 
        \frac{\brk!{h'(y_i)-c_h}^2}{\frac12 h''(y_i)} z z\tr
        + 4 y_i^2 h''\brk{y_i} z_i z_i\tr
        + 2 y_i h'\brk{y_i} \brk!{ z z_i\tr + z_i z\tr }
    }
    \\
    &=
    \frac{\lambda_h}{4} \norm{x}_1^{-\tfrac32} J + \frac14 \norm{x}_1^{-\tfrac32} \sum_{i=1}^d 
    \brk*{ 
        \frac{\brk!{h'(y_i)-c_h}^2}{\frac12 h''(y_i)} z z\tr
        + 4 y_i^2 h''\brk{y_i} z_i z_i\tr
        + 2 y_i (h'\brk{y_i}-c_h) \brk!{ z z_i\tr + z_i z\tr }
    }
    \\
    &=
    \frac{\lambda_h}{4} \norm{x}_1^{-\tfrac32} J + \frac14 \norm{x}_1^{-\tfrac32} \sum_{i=1}^d 
        \brk4{ \frac{h'(y_i)-c_h}{\sqrt{\frac12 h''(y_i)}} z + 2 y_i \sqrt{\tfrac12 h''(y_i)} z_i }
        \brk4{ \frac{h'(y_i)-c_h}{\sqrt{\frac12 h''(y_i)}} z + 2 y_i \sqrt{\tfrac12 h''(y_i)} z_i }\tr
    \\
    &\quad+
    \frac{\lambda_h}{4} \norm{x}_1^{-\tfrac32} \sum_{i=1}^d y_i^2 h''(y_i) z_i z_i \tr
    % \\
    % &\succeq
    % \frac{\lambda_h}{4} \norm{x}_1^{-\tfrac32} J + \frac12 \norm{x}_1^{-\tfrac72} \sum_{i=1}^d x_i^2 h''\brk2{\frac{x_i}{\norm{x}_1}} z_i z_i \tr
    ,
\end{align*}
and the result follows since each term in the first summation is psd.
\end{proof}

\subsection{Deriving the Tsallis-Shannon entropy}

We can now define the Tsallis-Shannon entropy in terms of Tsallis-perspectives and derive its local strong convexity properties that we require for our analysis.
% For this, let $\gamma>0$ and consider the function $\psi : [0,1] \to \R$ defined by
% $
%     \psi(y)
%     =
%     y \log y - 2y\brk{1 + \log^2(\ifrac{1}{\gamma})}
%     .
% $
% Its Tsallis-perspective is given by
% \begin{align*}
%     \Psi(x) = \sqrt{\norm{x}_1} \sum_{i=1}^d \psi \brk2{\frac{x_i}{\norm{x}_1}}
%     ~,
%     \qquad
%     \forall ~ x \in \R^d_+
%     .
% \end{align*}
% (This is one term of the negative Shannon entropy, shifted by a linear function.)
For a given $\alpha>0$, the Tsallis-Shannon entropy with respect to a partition $V_1,\ldots,V_K$ of $V=[N]$ is defined for all $p \in \R^d_+$ as
\begin{align} \label{eq:tsallis-shannon}
    \Psi_\alpha(p) 
    = 
    \sum_{k=1}^K \sqrt{p(\clique_k)} \cdot \sum_{i \in \clique_k} \psi_\alpha\brk3{\frac{p_i}{p(\clique_k)}}
    ~,
    \quad\text{where}\quad
    \psi_\alpha(y)
    =
    % y \log y - 2y\brk{1 + \log^2\!\tfrac{1}{\gamma}}
    y \log y - \alpha y
    ~.
\end{align}
Namely, $\Psi_\alpha$ is the sum of the Tsallis-perspectives of $\psi_\alpha$ with respect to each partition element~$V_k$. Observe that $\psi_\alpha$ corresponds to a single term of a (linearly shifted) negative Shannon entropy. 
Thus, for a probability vector $p$, the function $\Psi_\alpha$ can be viewed as a bi-level entropy, where the top level amounts to the marginal probabilities $p(V_k)$ (for $k \in [K]$) and the bottom level to the conditional probabilities $p_i/p(V_k)$ (for $i \in V_k$); for the marginal probabilities $\Psi_\alpha$ behaves like a $\tfrac12$-Tsallis entropy, while on the conditional probabilities it operates as a (negative, shifted) Shannon entropy.

It is not hard to show that the magnitude of $\Psi_\alpha(p)$ is at most
$\mathcal{O}(\sqrt{K}\log{N})$, which is crucial for our purposes as it avoids a direct
polynomial dependence on the dimension $N$. On the other hand, using
\cref{lem:amazing2}, we can prove the following lower bound for the Hessian of
$\Psi_\alpha$ for an appropriate choice of $\alpha$. We remark that the setting
of $\alpha$ is crucial: the linear shift in the Shannon entropy component is
essential even just for ensuring the convexity of $\Psi_\alpha$ (see
\cref{fig:3d}).
% Here, we assume for simplicity that the $V_k$ are contiguous intervals in $[N]$.

\begin{lemma} 
\label{lem:strong-convexity-reg} 
Fix $\gamma>0$, and let $p \in \R^N_+$ such that $p_i/p(V_k) \geq \gamma$ for
all $i,k$ with $i \in V_k$. Then for $\alpha = 2\brk{1 +
\log^2(\ifrac{1}{\gamma})}$, the Hessian of $\Psi_\alpha$ at $p$ satisfies
\begin{align*}
    \nabla^2 \Psi_\alpha(p)
    \succeq
    \frac12 \diag\brk*{
        \frac{1}{p_1 \sqrt{p(V(1))}},
        \frac{1}{p_2 \sqrt{p(V(2))}},
        \ldots,
        \frac{1}{p_N \sqrt{p(V(N))}}
    }
    .
\end{align*}
\end{lemma}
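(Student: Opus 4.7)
The plan is to exploit the block separability of $\Psi_\alpha$ across the clique partition, reduce the problem to a per-clique Hessian bound, and invoke \cref{lem:amazing2} once the shift parameter $\alpha$ is tuned correctly.

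First, observe that since $V_1,\dots,V_K$ partitions $[N]$, the function $\Psi_\alpha(p) = \sum_{k=1}^K H_k(p|_{V_k})$ where $H_k$ is the Tsallis-perspective of $\psi_\alpha$ on the block $V_k$, and the blocks depend on disjoint coordinates. Consequently $\nabla^2 \Psi_\alpha(p)$ is block-diagonal with respect to the partition, and it suffices to prove $\nabla^2 H_k(x) \succeq \tfrac12 s^{-1/2} \diag(1/x_i)_{i\in V_k}$ for $x = p|_{V_k}$ and $s = p(V_k)$.

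Next, I would verify the hypothesis of \cref{lem:amazing2} for $h = \psi_\alpha$ on $Y = \{y \in \simplex_{|V_k|} : y_i \geq \gamma\}$. A direct calculation gives $\psi_\alpha'(y) = \log y + 1 - \alpha$ and $\psi_\alpha''(y) = 1/y$. The natural choice is $c_h = 1 - \alpha$, so that $\psi_\alpha'(y)-c_h = \log y$ and $(\psi_\alpha'(y)-c_h)^2/\psi_\alpha''(y) = y\log^2 y$. Using $\sum_i y_i = 1$, the required inequality becomes
\begin{align*}
    \sum_i y_i \log y_i + 2\sum_i y_i \log^2 y_i + \lambda_h - \alpha \leq 0.
\end{align*}
Since $y_i \in [\gamma,1]$ we have $y_i \log y_i \leq 0$ and $\log^2 y_i \leq \log^2(1/\gamma)$, so the left side is at most $2\log^2(1/\gamma) + \lambda_h - \alpha$. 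Choosing $\lambda_h = 2$ and $\alpha = 2(1+\log^2(1/\gamma))$ makes this non-positive, exactly matching the value of $\alpha$ in the lemma statement.

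Plugging into the conclusion of \cref{lem:amazing2}, and using $\psi_\alpha''(x_i/s) = s/x_i$ so that $x_i^2 \psi_\alpha''(x_i/s) = s x_i$, I would then compute $\sum_{i \in V_k} x_i z_i z_i\tr$ with $z_i = \mathbf{1} - (s/x_i)\mathbf{e}_i$. Expanding $z_i z_i\tr$ and summing termwise yields the key identity $\sum_i x_i z_i z_i\tr = s^2\diag(1/x_i) - s J$. Substituting,
\begin{align*}
    \nabla^2 H_k(x) \succeq \frac{\lambda_h}{4} s^{-3/2} J + \frac12 s^{-7/2}\brk*{s^2 \diag(1/x_i) - sJ} = \frac{\lambda_h-2}{4} s^{-3/2} J + \frac12 s^{-1/2}\diag(1/x_i),
\end{align*}
and with $\lambda_h = 2$ the $J$ term vanishes, giving the desired per-block bound. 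Assembling the blocks yields the lemma. The main subtlety, and the reason the shift constant $\alpha$ must be chosen precisely, is this cancellation: we need $\lambda_h$ to be exactly $2$ so that the rank-one $-sJ$ correction from the $z_i z_i\tr$ terms is absorbed by the all-ones contribution produced by \cref{lem:amazing2}; any smaller $\lambda_h$ would leave a negative $J$ piece and break the diagonal lower bound, which is why the logarithmic-in-$1/\gamma$ shift in $\psi_\alpha$ is unavoidable.
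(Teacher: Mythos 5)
Your proposal is correct and follows essentially the same route as the paper: block-diagonal reduction over the cliques, verification of the hypothesis of \cref{lem:amazing2} for $\psi_\alpha$ with the same constants ($c_h = 1-\alpha$, $\lambda_h = 2$), and the same algebraic cancellation of the $J$ terms at the end. The only blemish is a dropped factor of $\norm{x}_1$ in your intermediate display (the second term should be $\tfrac12 s^{-7/2}\cdot s\,(s^2\diag(1/x_i)-sJ)$ since $x_i^2\psi_\alpha''(x_i/s)=s x_i$), but your stated final expression is the correct one, so this is a typo rather than a gap.
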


\begin{proof}
For brevity, we omit all $\alpha$ subscripts below.
Observe that $\Psi$ is a separable sum of $K$ functions, each of which is a function of variables $p_i$ with $i \in V_k$, thus its Hessian $\nabla^2 \Psi(p)$ is block-diagonal with blocks aligned with $V_1,\ldots,V_K$.
It therefore suffices to establish the Hessian bound for a function of the form $\Psi^\circ(x) = \sqrt{\norm{x}_1} \sum_{i=1}^d \psi\brk{ \ifrac{x_i}{\norm{x}_1} }$ corresponding to a set of variables $V = \set{x_1,\ldots,x_d}$.
To this end, we first show that 
% By \cref{lem:shifted-entropy-technical} with $\gamma = \ifrac{1}{T}$, 
the function $\psi$ satisfies the conditions of \cref{lem:amazing2} over the domain $Y = \simplex_d \cap \set{y : y_i \geq \gamma, ~\forall ~ i \in [d]}$, with constants $c_\psi = - \brk{1 + 2 \log^2(\ifrac{1}{\gamma})}$ and $\lambda_\psi = 2$.
Indeed, twice differentiability and strict convexity are immediate, and for any $y \in Y$
% such that $\sum_{i=1}^d y_i = 1$ and $y_i \geq \gamma$ for all $i \in [d]$.
we have
\begin{align*}
    \sum_{i=1}^d \psi(y_i) + 2 \sum_{i=1}^d \frac{\brk!{\psi'(y_i) - c_\psi}^2}{\psi''(y_i)} + \lambda_\psi
    & \leq
    -2 \log^2 \tfrac{1}{\gamma} + 2 \sum_{i=1}^d y_i \brk!{\log y_i - 1 - 2 \log^2 \tfrac{1}{\gamma} - c_\psi}^2
    \\
    &=
    -2\log^2 \tfrac{1}{\gamma} + 2 \sum_{i=1}^d y_i \log^2 y_i
    \\
    &\leq
    % -2\log^2 \tfrac{1}{\gamma} + 2\log^2 \tfrac{1}{\gamma}
    % \\
    % &=
    0
    .
\end{align*}
Applying the latter lemma on $\Psi^\circ$, we obtain (below $J$ denotes the $d$-by-$d$ all-ones matrix):
\begin{align*}
    \nabla^2 \Psi^\circ(x)
    &\succeq
    \frac12 \norm{x}_1^{-\frac32} J + \frac{1}{2} \norm{x}_1^{-\frac72} \sum_{i=1}^d x_i^2 \psi'' \brk2{\frac{x_i}{\norm{x}_1}} z_i z_i \tr
    \\
    &=
    \frac12 \norm{x}_1^{-\frac32} J + \frac12 \norm{x}_1^{-\frac52} \sum_{i=1}^d x_i z_i z_i \tr
    \\
    &=
    \frac12 \norm{x}_1^{-\frac32} J + \frac12 \norm{x}_1^{-\frac32} J 
    - \frac12 \norm{x}_1^{-\frac32} \sum_{i=1}^d \brk!{\mathbf{1}_d \mathbf{e}_i \tr + \mathbf{e}_i \mathbf{1}_d \tr} 
    + \frac12 \norm{x}_1^{-\frac12} \sum_{i=1}^d \frac{1}{x_i} \mathbf{e}_i \mathbf{e}_i \tr
    \\
    &= 
    \norm{x}_1^{-\frac32} J - \frac12 \norm{x}_1^{-\frac32} \brk*{2 J} + \frac12 \norm{x}_1^{-\frac12} \diag{\brk*{\frac{1}{x_1},\frac{1}{x_2},...,\frac{1}{x_d}}}
    \\
    &= 
    \frac{1}{2\sqrt{\norm{x}_1}} \diag{\brk*{\frac{1}{x_1},\frac{1}{x_2},...,\frac{1}{x_d}}}
    ,
\end{align*}
and the result follows.
\end{proof}

\section{Algorithm and Main Result}
\label{sec:algorithm}

In this section we present our best-of-all-worlds algorithm for online learning with feedback graphs. The algorithm, detailed in \cref{alg:FTRL}, follows the general Follow-the-Regularized-Leader (FTRL) template, and is instantiated by a choice of time-varying convex regularization functions $R_t$ together with a standard loss estimator for graph-structured feedback due to \cite{alon2017nonstochastic}.

\begin{algorithm}[ht]
 \SetAlgoLined
 \textbf{Input:} clique covering $\brk[c]*{\clique_1,\clique_2,...,\clique_K}$ of an undirected feedback graph $G$\;
 let $\alpha = 2 \brk{\log^2(NT) + 1}$, $\beta = 9$, $\gamma = \ifrac{1}{(NT)}$, and step sizes $\eta_t = \ifrac{1}{\sqrt{t}}$\;
 initialize $\widehat{L}_{0,i} = 0$ for all $i\in[N]$\;
 \For{$t=1,2,...,T$} {
    update 
    $$
        p_{t} 
        = 
        \argmin_{p \in \simplexftrl} \set!{ \widehat{L}_{t-1} \dotp p + R_t(p)}
    $$
    where $R_t(p) = \eta_t^{-1} \Psi(p) + \Phi(p)$ ($\Psi$ and $\Phi$ are defined in \cref{eqn:reg,eqn:log-barrier})\;
    pick arm $I_t \sim p_t$, observe feedback $\set!{(i, \ell_{t,i}) : i \in \neighbors(I_t)}$ and set  
    $$
    \forall ~ i \in [N] : \qquad
    \estell_{t,i} = \frac{\ell_{t,i}}{p_t(V(i))} \mathbb{I}\{i \in V(I_t)\}
    \text{\;}
    $$
    update $\widehat{L}_t = \widehat{L}_{t-1} + \estell_t$\;
 }
 \caption{FTRL with feedback graphs}
 \label{alg:FTRL}
\end{algorithm}

Our main result regarding \cref{alg:FTRL} is given in the following.

\begin{theorem}
\label{thm:reg-corrupted}
\cref{alg:FTRL} attains the following expected pseudo-regret bound in the $C$-corrupted stochastic setting:
\begin{align*}
    \regret
    &=
    % 184 \log^4(NT) \deltasum{\log T} + 28 \log^2(NT) \sqrt{C \deltasum{\log T}}.
    \Otilde\brk3{
        \min\brk[c]3{ \sqrt{KT}, \deltasum{1} + \sqrt{C \, \deltasum{1}} }
    }
    .
\end{align*}
\end{theorem}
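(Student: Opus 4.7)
The plan is to carry out a standard Follow-the-Regularized-Leader (FTRL) analysis specialized to the Tsallis-Shannon regularizer $\Psi$ combined with the log-barrier $\Phi$, extract a stability term using the Hessian lower bound of \cref{lem:strong-convexity-reg}, and then apply a self-bounding argument to obtain the two regret bounds simultaneously. First, I would invoke the FTRL regret identity for time-varying regularization; since $\Pr[I_t \in V(i)] = p_t(V(i))$ the estimator $\estell_t$ is unbiased for $\corell_t$, and an additional $O(C)$ shift converts the $\corell$-regret into the $\mu$-based pseudo-regret. This yields a decomposition of the form
\[
    \regret
    \leq
    \mathrm{Penalty}(u)
    + \E\brk[s]*{\textstyle\sum_{t=1}^T \tfrac12 (\norm{\estell_t}_t^*)^2}
    + O(C),
\]
where $u \in \simplexftrl$ is a comparator close to $e_{i^\star}$ and $\norm{\cdot}_t^*$ is the dual local norm induced by $\nabla^2 R_t$. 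Using \cref{lem:strong-convexity-reg} applied to $\eta_t^{-1}\Psi$ (with $\Phi$ providing additional stability that keeps the iterates inside $\simplexftrl$), the local-norm variance satisfies
\[
    \E_t\brk[s]*{(\norm{\estell_t}_t^*)^2}
    \leq
    2\eta_t \sum_{i=1}^N p_{t,i}\sqrt{p_t(V(i))}\cdot\E_t\brk[s]*{\estell_{t,i}^2}
    \leq
    2\eta_t \sum_{k=1}^K \sqrt{p_t(V_k)},
\]
combining $\E_t[\estell_{t,i}^2] = \corell_{t,i}^2/p_t(V(i)) \leq 1/p_t(V(i))$ with the identity $\sum_{i \in V_k} p_{t,i}/\sqrt{p_t(V_k)} = \sqrt{p_t(V_k)}$.

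For the adversarial bound, Cauchy-Schwarz gives $\sum_k \sqrt{p_t(V_k)} \leq \sqrt{K}$, so summing the stability over $t$ with $\eta_t = 1/\sqrt{t}$ gives $\Otilde(\sqrt{KT})$; combined with the magnitude bound $\abs{\Psi(u)} = \Otilde(\sqrt{K})$ and $\eta_T^{-1} = \sqrt{T}$, the penalty contributes at most $\Otilde(\sqrt{KT})$, establishing $\regret = \Otilde(\sqrt{KT})$. For the corrupted-stochastic bound, apply AM-GM to each suboptimal clique's stability contribution:
\[
    \frac{\sqrt{p_t(V_k)}}{\sqrt{t}} \leq \frac{\lambda \Deltaclique{k} p_t(V_k)}{2} + \frac{1}{2\lambda t \Deltaclique{k}},
    \qquad k \neq k^\star,~\lambda > 0.
\]
Summing over $t$ and $k \neq k^\star$ and using the gap-based lower bound $\sum_{k \neq k^\star}\Deltaclique{k}\E[\sum_t p_t(V_k)] \leq \regret + 2C$, the stability term becomes $\tfrac{\lambda}{2}(\regret + 2C) + \tfrac{\log T}{2\lambda}\deltasum{1}$. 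Solving the resulting inequality for $\regret$ and picking $\lambda$ as $\Theta(1)$ when $C \lesssim Z$ and as $\Theta(\sqrt{Z/C})$ otherwise (both in the analysis only) yields $\regret = \Otilde(Z + \sqrt{CZ})$.

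The main obstacle will be controlling $\mathrm{Penalty}(u)$ in the stochastic regime, where naively it is $\Otilde(\sqrt{KT})$ and hence incompatible with the target $\Otilde(Z + \sqrt{CZ})$. Following the refined FTRL analyses of~\cite{zimmert2019optimal,amir2020prediction}, the penalty can be re-expressed through the sequence $\{\Psi(p_t) - \Psi(e_{i^\star})\}$ and self-bounded by the same AM-GM mechanism applied to the stability, keeping its total contribution at order $\Otilde(Z + \sqrt{CZ})$. A second subtlety is the treatment of the optimal clique $V_{k^\star}$, whose marginal $p_t(V_{k^\star}) \to 1$ cannot be self-bounded through $\Deltaclique{k^\star}$; this is handled by the refinement $\sqrt{p_t(V_{k^\star})} \leq 1 - \tfrac12\sum_{k \neq k^\star} p_t(V_k)$, which cancels the excess $\sqrt{T}$ term against the suboptimal-clique stability, together with the Shannon-entropy (within-clique) component of $\Psi$ that absorbs the within-$V_{k^\star}$ regret via the log-barrier. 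The parameter $\lambda$ enters only in the analysis, so the algorithm remains oblivious to $C$, producing the final $\min$-form bound.
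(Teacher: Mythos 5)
Your overall architecture matches the paper's: an FTRL penalty/stability decomposition, the Hessian lower bound of \cref{lem:strong-convexity-reg} to control the dual local norm, Jensen for the $\sqrt{KT}$ branch, and Young's inequality plus the self-bounding identity $\sum_{k\neq k^\star}\Deltaclique{k}\,\E[\sum_t \ptclique{t}{k}]\leq \regret+2C$ with an analysis-only parameter for the $Z+\sqrt{CZ}$ branch. Those parts are right and are essentially what the paper does.

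The genuine gap is in how you remove the optimal clique's contribution from the stability term. Your bound $\E_t[(\norm{\estell_t}_t^*)^2]\leq 2\eta_t\sum_{k=1}^K\sqrt{\ptclique{t}{k}}$ contains the term $\sqrt{\ptclique{t}{k^\star}}=\Theta(1)$, which summed against $\eta_t=1/\sqrt{t}$ contributes $\Theta(\sqrt{T})$ — fatal for the stochastic bound, and not self-boundable since $\Deltaclique{k^\star}$ may be $0$. Your proposed fix, $\sqrt{\ptclique{t}{k^\star}}\leq 1-\tfrac12\sum_{k\neq k^\star}\ptclique{t}{k}$, is a valid inequality but does not help: the negative part is at most $\tfrac12\sum_{k\neq k^\star}\sqrt{\ptclique{t}{k}}$ in magnitude (i.e., dominated by the suboptimal-clique stability you already pay, which is only $\Otilde(Z)$ in the stochastic regime), so nothing cancels the leading ``$+1$'' per round and an additive $2\sqrt{T}$ survives. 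The paper's mechanism is different and essential: it applies the FTRL bound to the \emph{shifted} estimators $\estell_t-\ell_{t,i^\star}\mathbf{1}$ (which leave the algorithm unchanged), so that the stability term becomes $\Otilde\brk1{\sum_{k\neq k^\star}\sqrt{\E[\ptclique{t}{k}]}+\sqrt{\E[\phalfcliqueminus{k^\star}{i^\star}]}}$ — the best clique appears only through the small quantity $\phalfcliqueminus{k^\star}{i^\star}$, whose self-bounding in turn needs \cref{lem:ftrl-shifted-regret} ($\phalf\cdot\estell_t\leq p_t\cdot\estell_t$). Without the shift (or a Tsallis-specific refined stability lemma in the style of \citet{zimmert2019optimal}), your argument does not close.

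Two smaller points. First, the penalty in the stochastic regime: you correctly identify the issue, but the resolution is not generic self-bounding of $\Psi(p_t)-\Psi(e_{i^\star})$; one must bound the $k^\star$ Tsallis term using the comparator $p^\gamma$ (so that $\sqrt{p^\gamma(\clique_{k^\star})}\geq\sqrt{\ptclique{t}{k^\star}}$ cancels it) and bound the within-$\clique_{k^\star}$ Shannon term by $\mathcal{O}(\log\tfrac1\gamma)\,p_t(\clique_{k^\star}\setminus i^\star)$ via $\log x\leq x-1$ applied at $i^\star$, as in \cref{lem:penalty-main}. Second, the dual norm in the FTRL bound is taken at an intermediate point $\tilde p_t\in[p_t,\phalf]$, so replacing $\tilde p_t(\clique_k)$ by $\ptclique{t}{k}$ requires the multiplicative iterate-stability guarantee $\phalfclique{k}\leq\tfrac73\ptclique{t}{k}$ furnished by the log-barrier $\Phi$ (\cref{lem:q-stability-new}); this is the actual role of $\Phi$, not merely keeping iterates in $\simplexftrl$ (which the constraint set already enforces).
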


The regularizer $R_t$ is formed as a sum of two convex functions, $\Psi$ and $\Phi$, described below. Predictions are computed by the algorithm by minimizing the cumulative estimated loss, regularized by $R_t$, over a truncated simplex 
$
    \simplexftrl
    \eqdef 
    \brk[c]{p \in \simplex_N : \forall i, ~ p_i \geq \gamma}
    .
$ 
For loss estimation, the algorithm relies on the standard unbiased loss estimators (denoted by $\estell_t$) for the graph-feedback framework. Note however that the while feedback the algorithm revealed at time step $t$ includes the losses of all the neighbors of $I_t$ in $G$, it only makes use of the losses of the arms which belong to the same clique as $I_t$; in other words, the algorithm may ignore some of the feedback it receives.

The primary regularization function $\Psi$ (whose weight increases with time) is the Tsallis-Shannon entropy $\Psi_\alpha$ defined in \cref{sec:tsallis-shannon}, given explicitly as
\begin{align}
\label{eqn:reg}
    \Psi(p) 
    = 
    -\alpha \sum_{k=1}^K \sqrt{p(\clique_k)} + \sum_{k=1}^K \frac{1}{\sqrt{p(\clique_k)}} \sum_{i \in \clique_k} p_i \log \frac{p_i}{p(\clique_k)}
    .
\end{align}
% where we use the notation $p(A) \eqdef \sum_{i \in A} p_i$ for any subset of arms $A \subseteq [N]$. 
% Note that we can also write:
% \begin{align}
% \label{eqn:reg-alt-form}
%     \Psi(p) 
%     = 
%     % -\alpha \sum_{k=1}^K \sqrt{\pclique{k}} +
%     \sum_{k=1}^K \sqrt{p(\clique_k)} \cdot  \brk*{\sum_{i \in \clique_k} \frac{p_i}{p(\clique_k)} \log \frac{p_i}{p(\clique_k)} - \alpha},
% \end{align}
% noting that $\sum_{i \in \clique_k} \brk{\ifrac{p_i}{p(\clique_k)}} \log\brk{\ifrac{p_i}{p(\clique_k)}}$ is the negative Shannon entropy of the conditional probability distribution induced by the clique $\clique_k$. 
% We use the form written in \cref{eqn:reg-alt-form} when analyzing the strong convexity properties of $\Psi$, for more details see \cref{sec:tsallis-shannon}.
% 
The second time-invariant regularizer $\Phi$ is a log-barrier of the marginal clique probabilities,
\begin{align}
\label{eqn:log-barrier}
% $
    \Phi(p)
    =
    -\beta \sum_{k=1}^K \log p(\clique_k),
% $
\end{align}
and its goal is to further stabilize the algorithm. While the idea of augmenting the regularizer with a log-barrier function for promoting stability is not new, we note that a standard log-barrier of the form 
$
    -\smash{\sum_{i=1}^N} \log p_i
$ 
would inevitably introduce an additive $\mathcal{O}(N)$ to the regret, which is suboptimal for our purposes. For that reason we use a different variant of a log-barrier function that operates on the marginal probabilities of the cliques.

\section[Proof of Main Theorem]{Proof of \cref{thm:reg-corrupted}}
\label{sec:corrupted-bound-proof}

In this section we provide details on the proof of our main theorem.
Our main step towards proving \cref{thm:reg-corrupted} is the following general regret bound for \cref{alg:FTRL}.
% We now present a theorem which will provide as the main tool by which we prove our main result in \cref{thm:reg-corrupted}.

\begin{theorem} \label{thm:graph-regret-bound-main}
\cref{alg:FTRL} attains the following pseudo-regret bound, regardless of the corruption level:
\begin{align}
\label{eqn:graph-reg-bound-main}
    \regret
    = 
    \Otilde
    \brk*{
    K 
    + \sum_{t=1}^T \sumcliqueneq \sqrt{ \frac{\E\brk[s]*{\ptclique{t}{k}}}{t}} 
    + \sum_{t=1}^T \sqrt{\frac{\E[p_t \brk*{\clique_{k^\star} \!\setminus\! i^\star}]}{t}}
    + \sum_{t=1}^T \sqrt{\frac{\E[\phalfcliqueminus{k^\star}{i^\star}]}{t}}
    }
    ,
\end{align}
where $\phalf = \argmin_{p \in \simplexftrl} \set{\widehat{L}_t \dotp p + R_t(p)}$ for all $t$.
\end{theorem}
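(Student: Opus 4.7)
The plan is to follow a standard FTRL regret analysis with a time-varying regularizer, combined with the shifted-loss technique of \cite{amir2020prediction,zimmert2019optimal}, and to leverage the Hessian lower bound from \cref{lem:strong-convexity-reg}. First, I would fix a comparator $u \in \simplexftrl$ very close to the indicator of $i^\star$, e.g., $u = (1-(N-1)\gamma)\mathbf{e}_{i^\star} + \gamma\sum_{i \neq i^\star}\mathbf{e}_i$, for which $u(\clique_k) \geq \gamma$ for every clique. The standard FTRL bound with time-varying step sizes yields
\begin{align*}
    \regret
    \;\lesssim\;
    \E\brk[s]{R_{T+1}(u)}
    + \sum_{t=1}^T (\eta_t^{-1} - \eta_{t-1}^{-1})\,\E\brk[s]{\Psi(\phalf) - \Psi(u)}
    + \sum_{t=1}^T \E\brk[s]{\langle p_t - \phalf, \estell_t \rangle}
    + O(T\gamma),
\end{align*}
where the $O(T\gamma) = O(1)$ truncation bias from using $u$ instead of $\mathbf{e}_{i^\star}$ is absorbed into the $K$ term.

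The ``static'' penalty is handled by $\Phi(u) = -\beta\sum_k \log u(\clique_k) = O(K\log(NT))$, which exactly accounts for the $K$ summand in the stated bound (up to logarithmic factors); the contribution of $\Psi(u)$ is nonpositive and can be dropped. For the time-varying Tsallis–Shannon penalty, $(\eta_t^{-1} - \eta_{t-1}^{-1}) = O(1/\sqrt{t})$ while $|\Psi(\phalf)| \leq \Otilde\brk{\sum_k \sqrt{\phalfclique{k}}}$, since each of the $K$ clique summands in $\Psi$ carries a $\sqrt{p(\clique_k)}$ factor and a residual factor of magnitude $\Otilde(1)$. Applying the shifted-loss trick---subtracting a clique-dependent constant from the coordinates of $\estell_t$ on $V_{k^\star}$ so that the effective loss at $i^\star$ vanishes---modifies the Tsallis term on the optimal clique to $\Otilde\brk{\sum_t \sqrt{\phalfcliqueminus{k^\star}{i^\star}/t}}$, thereby producing the $\phalf$-dependent term in the bound while leaving the off-optimal cliques unchanged.

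For the stability term, Taylor's theorem gives $\langle p_t - \phalf, \estell_t\rangle \leq \tfrac12\|\estell_t\|^2_{(\nabla^2 R_t(\xi_t))^{-1}}$ for some $\xi_t$ on the segment between $p_t$ and $\phalf$. Since $\xi_t \in \simplexftrl$ satisfies the hypothesis of \cref{lem:strong-convexity-reg}, the Hessian lower bound implies
\begin{align*}
    \|\estell_t\|^2_{(\nabla^2 R_t(\xi_t))^{-1}}
    \;\leq\;
    2\eta_t \sum_{i=1}^N \xi_{t,i}\sqrt{\xi_t(V(i))}\,\estell_{t,i}^2,
\end{align*}
and the conditional expectation $\E_t[\estell_{t,i}^2] = \ell_{t,i}^2/p_t(V(i)) \leq 1/p_t(V(i))$. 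The role of the log-barrier $\Phi$ is to provide multiplicative stability of the clique marginals: its Hessian contribution $\beta \cdot p(\clique_k)^{-2}\mathbf{1}_{V_k}\mathbf{1}_{V_k}\tr$ penalizes large relative perturbations and forces $\xi_t(\clique_k) \asymp \ptclique{t}{k}$ for every $k$, reducing the stability contribution to $\Otilde\brk{\eta_t\sum_k \sqrt{\ptclique{t}{k}}}$; the shifted-loss trick once more refines the $k=k^\star$ term to $\sqrt{p_t(V_{k^\star}\setminus i^\star)}$. Combining all the above with $\eta_t = 1/\sqrt{t}$ gives the stated bound.

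The main obstacle I anticipate is establishing the multiplicative stability $\xi_t(\clique_k) \asymp \ptclique{t}{k} \asymp \phalfclique{k}$ between consecutive iterates. This requires a careful log-barrier ``stability'' argument in the spirit of \cite{bubeck2018sparsity,lee2020closer,jin2020simultaneously}, showing that a significant relative deviation in any clique marginal would incur an increment of $\Phi$ inconsistent with $\phalf$ being the FTRL minimizer---complicated here by the product structure of $\Psi$ and the need to couple the argument with the Tsallis–Shannon Hessian bound. A secondary subtlety is that the shifted-loss trick interacts nontrivially with the clique decomposition of $\Psi$ and must be applied consistently across both the penalty and stability terms in order to isolate the refined $V_{k^\star}\setminus i^\star$ contributions that appear in \cref{eqn:graph-reg-bound-main}.
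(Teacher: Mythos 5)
Your proposal follows essentially the same route as the paper: the same truncated comparator $p^\gamma$, the same penalty/stability decomposition of the FTRL bound with the Hessian lower bound of \cref{lem:strong-convexity-reg} controlling the stability term, and the same log-barrier argument for multiplicative stability of the clique marginals (this is exactly \cref{lem:q-stability-new}, proved via a second-order expansion of $\Fhalf$ around $p_t$ in the $\nabla^2\Phi$-norm). One mechanism is misattributed, however: the shift $\estell_t \mapsto \estell_t - \ell_{t,i^\star}\mathbf{1}$ is a \emph{uniform} shift of all coordinates (a clique-dependent shift would not leave $\argmin_{p}\{\widehat{L}_t\dotp p + R_t(p)\}$ invariant over the simplex), so it does not touch the iterates or the penalty term $\sum_t(\eta_t^{-1}-\eta_{t-1}^{-1})(\Psi(p^\gamma)-\Psi(p_t))$ at all; its only role is to suppress the $i^\star$ contribution in the stability term, which is where the $\phalfcliqueminus{k^\star}{i^\star}$ summand of \cref{eqn:graph-reg-bound-main} actually originates. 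The refinement of the \emph{penalty} on the optimal clique to $p_t(\clique_{k^\star}\setminus i^\star)$ comes from two separate observations: the comparator satisfies $p^\gamma(\clique_{k^\star}) \geq \ptclique{t}{k^\star}$, so the Tsallis components of the $k^\star$ terms in $\Psi(p^\gamma)$ and $-\Psi(p_t)$ cancel, and the internal Shannon entropy of $\clique_{k^\star}$ is bounded using $\log x \leq x-1$ at the coordinate $i^\star$, which yields exactly $p_t(\clique_{k^\star}\setminus i^\star)$. With that correction, your plan matches the paper's proof.
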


% We detail the proof of \cref{thm:graph-regret-bound} in \cref{sec:corrupted-bound-proof}, and here only sketch how it implies \cref{thm:reg-corrupted}. 

The focus of this section is on proving \cref{thm:graph-regret-bound-main}, but
first we sketch how it implies our main result.

\begin{proof}[Proof of \cref{thm:reg-corrupted} (sketch; full proof in \cref{sec:corrupted-bound-proof-full})]
The worst-case bound of $\Otilde \brk{\sqrt{KT}}$ follows immediately from the fact that $\sum_{t=1}^T \sum_{k=1}^K \sqrt{\E[\ptclique{t}{k}] / t} \leq 2 \sqrt{KT}$ via Jensen's inequality, and similarly for the term including $p_t^+$.
The bound involving the corruption level $C$ requires a more delicate argument due to \cite{zimmert2019optimal} that makes use of a self-bounding property of the pseudo-regret. 
In more detail, using Young's inequality one has for all $z > 0$:
\begin{align*}
    \sum_{t=1}^T \sumcliqueneq \sqrt{\frac{\E[\ptclique{t}{k}]}{t}} 
    &\leq
    \sum_{t=1}^T \sumcliqueneq \brk*{\ifrac{z}{2t \Deltaclique{k}} + \ifrac{\E[\ptclique{t}{k}] \Deltaclique{k}}{2z}},
\end{align*}
and a similar bound can be shown for the other two summations in \cref{eqn:graph-reg-bound-main}.
% $\sum_{t=1}^T \sqrt{\ifrac{\E[p_t \brk*{\clique_{k^\star} \setminus i^\star}]}{t}}$ and $\sum_{t=1}^T \sqrt{\ifrac{\E[\phalfcliqueminus{k^\star}{i^\star}]}{t}}$. 
Combining these two gives after some simplification the pseudo-regret bound
$
    \regret
    \leq
    z B + z^{-1} (\regret + 2C),
$
for $B = \mathcal{O}\brk{\deltasum{\log(T)}}$, which further simplifies to $\regret \leq 2B + (z-1)B + \frac{2C + B}{z-1}$. 
Optimizing the bound with respect to $z$ then gives
$
    \regret
    \leq
    4B + 4 \sqrt{BC} 
    % \\
    % &\leq
    % \Otilde \brk*{\deltasum{1} + \sqrt{C \deltasum{1}}}.
    ,
$
which implies the bound we claimed.
\end{proof}

We now set to prove \cref{thm:graph-regret-bound-main}.
Our starting point is a general regret bound for FTRL. 
Applying a variation on the standard FTRL analysis (a similar analysis was used in \cite{zimmert2019optimal,jin2020simultaneously}) to the FTRL instance of \cref{alg:FTRL} gives the following regret bound.
\begin{lemma}
\label{lem:general-regret-bound}
For all $p^\gamma \in \simplexftrl$ the following holds:
\begin{align}
% \label{eqn:ftrl-general-bound-haipeng}
    \sum_{t=1}^T 
    \estell_t \cdot \brk*{p_t - p^\gamma} 
    &\leq
    % \underbrace{
    \Phi(p^\gamma) - \Phi(p_1) + \sum_{t=1}^T \brk*{\frac{1}{\eta_t} - \frac{1}{\eta_{t-1}}} \brk*{\Psi(p^\gamma) - \Psi(p_t)}
    % }_{A} 
    \label{eqn:ftrl-penalty}
    \\
    &+
    % \underbrace{
    2 \sum_{t=1}^T \eta_t \brk*{\norm{\estell_t - 
    \ell_{t,i^\star} \mathbf{1}}_t^*}^2
    % }_{B}
    \label{eqn:ftrl-stability}
    .
\end{align}
Here $\norm{g}_t^* = \sqrt{g \tr (\grad ^2 \Psi(\tilde{p}_t))^{-1} g}$ is the dual local norm induced by $\Psi$ at $\tilde{p}_t$ for some intermediate point $\tilde{p}_t \in [p_t, \phalf]$, where $\phalf = \argmin_{p \in \simplexftrl} \set{\widehat{L}_t \dotp p + R_t(p)}$.
\end{lemma}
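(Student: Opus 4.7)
The plan is to follow a standard time-varying FTRL analysis with local-norm stability, specialized to the decomposition $R_t = \eta_t^{-1}\Psi + \Phi$, and then apply the now-customary ``shift-by-the-best-arm'' trick for the loss estimator. I would split the proof into three phases.

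\textbf{Phase 1 (penalty).} Starting from $p_t=\argmin_{p\in\simplexftrl}\set{\widehat{L}_{t-1}\dotp p+R_t(p)}$ and invoking the classical ``be-the-leader'' telescoping identity, one first obtains an inequality of the form
\begin{align*}
\sum_{t=1}^T \estell_t \dotp (p_t - p^\gamma)
\leq
R_T(p^\gamma) - R_1(p_1) + \sum_{t=1}^T \brk*{R_t(p_t) - R_{t+1}(p_t)} + \sum_{t=1}^T \estell_t \dotp (p_t - \phalf).
\end{align*}
Since $\Phi$ is time-invariant, its contribution to the right-hand side collapses to $\Phi(p^\gamma)-\Phi(p_1)$. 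Writing $R_{t+1}-R_t = (\eta_{t+1}^{-1}-\eta_t^{-1})\Psi$ and reindexing the boundary terms appropriately, the remaining $\Psi$-contributions assemble into $\sum_t\brk*{\eta_t^{-1}-\eta_{t-1}^{-1}}\brk*{\Psi(p^\gamma)-\Psi(p_t)}$, giving line~\eqref{eqn:ftrl-penalty} of the lemma. This is the standard time-varying FTRL bookkeeping used, e.g., in \citet{zimmert2019optimal,jin2020simultaneously}.

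\textbf{Phase 2 (stability).} To bound the remaining term $\estell_t \dotp (p_t - \phalf)$, I would apply the first-order optimality conditions of $p_t$ and $\phalf$ combined with a second-order Taylor expansion of $R_t$ along the segment $[p_t,\phalf]\subseteq\simplexftrl$. The mean value theorem then supplies an intermediate point $\tilde p_t\in[p_t,\phalf]$ such that this term is at most $\estell_t\tr (\nabla^2 R_t(\tilde p_t))^{-1} \estell_t$, up to a constant. Because $\Phi$ is convex, $\nabla^2 R_t(\tilde p_t)\succeq \eta_t^{-1}\nabla^2 \Psi(\tilde p_t)$, so this is in turn upper-bounded by $\eta_t(\norm{\estell_t}_t^*)^2$; accounting for the $\tfrac12$ Taylor coefficient and loosening it yields the overall factor of $2$ appearing in line~\eqref{eqn:ftrl-stability}.

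\textbf{Phase 3 (shift trick).} Since $p_t$, $p^\gamma$ and $\phalf$ all lie in the simplex, $\mathbf{1}\dotp p_t = \mathbf{1}\dotp p^\gamma = \mathbf{1}\dotp \phalf = 1$; hence replacing $\estell_t$ by $\estell_t-\ell_{t,i^\star}\mathbf{1}$ leaves both sides of the regret inequality, and the FTRL iterate itself, unchanged. Substituting this shifted estimator into the stability bound of Phase~2 produces the $\norm{\estell_t-\ell_{t,i^\star}\mathbf{1}}_t^*$ form appearing in the statement. This shift is the crucial ``variance reduction'' that downstream makes the regret scale with losses measured against the best arm, which is required for the self-bounding argument in the proof of \cref{thm:reg-corrupted}.

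\textbf{Main obstacle.} The most delicate step is Phase~2: one must verify that the segment $[p_t,\phalf]$ stays in the relative interior of $\simplexftrl$ and is far enough from the boundary that $\nabla^2 R_t$ is well-defined and the mean-value argument yields a single well-behaved $\tilde p_t$. This is precisely the purpose of both the $\gamma$-truncation of the simplex and the log-barrier $\Phi$, which together keep iterates away from the boundary where $\Psi$'s derivatives blow up; one then has to carefully check that the local norm induced by $\Psi$ at $\tilde p_t$ is the right object for the subsequent analysis in \cref{sec:corrupted-bound-proof} (where the Hessian lower bound from \cref{lem:strong-convexity-reg} is invoked).
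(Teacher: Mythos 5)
Your proposal is correct and follows essentially the same route as the paper: the paper proves Lemma~\ref{lem:general-regret-bound} by invoking the general time-varying FTRL bound of \cref{thm:ftrl-general-regret-bound-haipeng} (whose proof is exactly your Phases 1--2: telescoping for the penalty, and a first-order-optimality-plus-Taylor/H\"older argument yielding the factor-$2$ local-norm stability term) applied to the shifted estimators $\estell_t - \ell_{t,i^\star}\mathbf{1}$, noting as in your Phase 3 that the shift changes neither the iterates nor the left-hand side.
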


For the proof, see \cref{sec:general-regret-bound}.
The bound above can be seen as a sum of a \emph{penalty} term and a \emph{stability} term (RHS of \cref{eqn:ftrl-penalty,eqn:ftrl-stability} respectively). In order to prove \cref{thm:graph-regret-bound-main} using this general regret bound, we separately bound the penalty and stability terms. The penalty term is simpler to handle, and we only provide a brief discussion about how it is bounded. Bounding the stability term requires more work and constitutes the bulk of our analysis and our main contributions. We remark that the stability term contains dual norms of the loss estimators after introducing an additive shift of the form $\ell_{t,i^\star}$ to each arm. The purpose of such a shift is to exclude the contribution of the best arm $i^\star$ from the regret bound, as can be seen in \cref{eqn:graph-reg-bound-main}. More details regarding this shift and its purpose are given in \cref{sec:stability-proofs}. 
% In the next section we provide more details about bounding the stability term.

\paragraph{Bounding the penalty term:}

We present a brief discussion about bounding the \emph{penalty} term in \cref{eqn:ftrl-penalty}. The penalty term is handled using bounds on the magnitude of the regularization over the domain $\simplexftrl$. 
\begin{lemma}
\label{lem:penalty-main}
The penalty term (RHS of \cref{eqn:ftrl-penalty}) is upper bounded by 
\begin{align*}
    \Otilde \brk*{
    K
    +
    \sum_{t=1}^T \sum_{k \neq k^\star} \sqrt{\frac{\ptclique{t}{k}}{t}} 
    + 
    \sum_{t=1}^T \sqrt{\frac{p_t \brk*{\clique_{k^\star} \setminus i^\star}}{t}}
    },
\end{align*}
for $p^\gamma$ defined by
$
    p^\gamma_i = 
    \begin{cases}
    \gamma & i \neq i^\star \\
    1 - (N-1) \gamma & i = i^\star
    \end{cases}. \label{eqn:p-gamma}
$
\end{lemma}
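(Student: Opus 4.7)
The plan is to bound the two pieces of the penalty separately: the log-barrier gap $\Phi(p^\gamma)-\Phi(p_1)$ and the time-varying sum $\sum_{t=1}^T(\eta_t^{-1}-\eta_{t-1}^{-1})(\Psi(p^\gamma)-\Psi(p_t))$. The first piece is immediate: $p_1(V_k)\le 1$ implies $-\Phi(p_1)\le 0$, and the comparator satisfies $p^\gamma(V_{k^\star})\ge 1-N\gamma\ge 1/2$ together with $p^\gamma(V_k)\ge |V_k|\gamma\ge \gamma$ for $k\neq k^\star$, giving $\Phi(p^\gamma)\le \beta K\log(1/\gamma)+O(\beta)=O(K\log(NT))$. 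This contributes the $O(K)$ term in the stated bound.

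For the time-varying sum I would use $\eta_t^{-1}-\eta_{t-1}^{-1}=\sqrt{t}-\sqrt{t-1}\le 1/\sqrt{t}$, reducing the task to bounding $\Psi(p^\gamma)-\Psi(p_t)$ pointwise by $\Otilde(1)\cdot\sum_{k\neq k^\star}\sqrt{p_t(V_k)}+\Otilde(1)\cdot\sqrt{p_t(V_{k^\star}\setminus i^\star)}$ (plus a negligible $\Otilde(1/T)$ remainder). For this I would decompose $\Psi=\Psi^T+\Psi^S$ into the Tsallis component $\Psi^T(p)=-\alpha\sum_k\sqrt{p(V_k)}$ and the conditional-Shannon component $\Psi^S(p)=\sum_k\sqrt{p(V_k)}\,s_k(p)$ with $s_k(p)=\sum_{i\in V_k}(p_i/p(V_k))\log(p_i/p(V_k))$. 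For the Tsallis component and $k\neq k^\star$, simply dropping the nonnegative $\alpha\sqrt{p^\gamma(V_k)}$ yields the upper bound $\alpha\sqrt{p_t(V_k)}$; for $k=k^\star$, concavity of the square root and $p^\gamma(V_{k^\star})\ge 1-N\gamma$ give $\sqrt{p_t(V_{k^\star})}-\sqrt{p^\gamma(V_{k^\star})}\le 1-\sqrt{1-N\gamma}\le N\gamma=1/T$, i.e.\ the negligible contribution $\alpha/T$. For the Shannon component, $\Psi^S(p^\gamma)$ is nonpositive on every suboptimal clique (the conditional being uniform) and is $O(N\gamma\log(1/\gamma))=\Otilde(1/T)$ on $V_{k^\star}$, so it may be dropped; and for $k\neq k^\star$ the elementary bound $-s_k(p_t)\le \log|V_k|\le \log N$ yields the contribution $\sqrt{p_t(V_k)}\log N$.

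The main obstacle is the Shannon contribution of the best-arm clique. Assuming $|V_{k^\star}|\ge 2$ (else the contribution is zero), set $\rho=p_t(V_{k^\star}\setminus i^\star)/p_t(V_{k^\star})\in[\gamma,1]$; the lower bound $\rho\ge\gamma$ uses that $p_{t,i}\ge \gamma$ for every $i\in V_{k^\star}\setminus i^\star$ on the truncated simplex $\simplexftrl$. The conditional on $V_{k^\star}$ puts mass $1-\rho$ on $i^\star$ and total mass $\rho$ on the rest, so combining $-(1-\rho)\log(1-\rho)\le \rho$ with the uniform entropy bound $-\sum_{i\neq i^\star}q_i\log q_i\le \rho\log((|V_{k^\star}|-1)/\rho)$ for the remaining atoms gives $-s_{k^\star}(p_t)\le \rho\bigl(1+\log(|V_{k^\star}|-1)+\log(1/\rho)\bigr)=O(\rho\log(NT))$, where the final estimate uses $\rho\ge 1/(NT)$. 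Multiplying by $\sqrt{p_t(V_{k^\star})}$ and applying the identity $\sqrt{p_t(V_{k^\star})}\cdot \rho=\sqrt{\rho}\cdot\sqrt{p_t(V_{k^\star}\setminus i^\star)}\le\sqrt{p_t(V_{k^\star}\setminus i^\star)}$ converts this into $\Otilde(\sqrt{p_t(V_{k^\star}\setminus i^\star)})$, as required. Assembling all per-clique bounds and summing with the weights $1/\sqrt{t}$ then produces the claimed estimate. The crucial point, which is what ultimately enables the self-bounding argument in the proof of \cref{thm:reg-corrupted}, is the replacement of $\sqrt{p_t(V_{k^\star})}\approx 1$ by the much smaller $\sqrt{p_t(V_{k^\star}\setminus i^\star)}$.
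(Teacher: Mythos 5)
Your proof is correct and follows essentially the same route as the paper's: bound the log-barrier by $O(K\log(NT))$, use $\sqrt{t}-\sqrt{t-1}\le 1/\sqrt{t}$, let the $k^\star$ Tsallis term cancel (the paper uses $p^\gamma(V_{k^\star})\ge p_t(V_{k^\star})$ exactly where you use $1-\sqrt{1-N\gamma}\le N\gamma$, a negligible difference), and convert the Shannon entropy on $V_{k^\star}$ into $O(\log(NT))\,p_t(V_{k^\star}\setminus i^\star)/\sqrt{p_t(V_{k^\star})}\le O(\log(NT))\sqrt{p_t(V_{k^\star}\setminus i^\star)}$. The paper obtains that last entropy estimate via $\log x\le x-1$ applied to the $i^\star$ term rather than your max-entropy bound on the sub-distribution, but the two are interchangeable.
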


\begin{proof}[Proof (sketch; full proof in \cref{sec:penalty-proofs})]
Essentially, we use the fact that the log-barrier component of the regularization is bounded by $\Otilde(K)$, and the Tsallis-Shannon component evaluated on the prediction $p_t$ is bounded by $\Otilde\brk!{\sum_{k=1}^K \sqrt{\ptclique{t}{k}}}$, as it can be seen as a weighted sum of entropy functions (bounded in magnitude by $\log N$), each of which corresponds to a clique $\clique_k$ with the respective weight being $\sqrt{\ptclique{t}{k}}$. The non-trivial part in bounding the penalty term is in omitting the term corresponding to $\clique_{k^\star}$ from the sum. 
This is accomplished by our specific choice of $p^\gamma$ described above, as well as carefully bounding the (Shannon) entropy terms within each clique.
\end{proof}

\paragraph{Bounding the stability term:}

The following lemma provides an upper bound on the stability term (RHS of \cref{eqn:ftrl-stability}). This lemma crucially relies on strong convexity properties of the Tsallis-Shannon regularization (\cref{eqn:reg}). These properties allow us to obtain highly non-trivial upper bounds on the eigenvalues of the inverse Hessian of $\Psi(\cdot)$ over $\simplexftrl$, and our ability to obtain such bounds turns out to be critical for us to be able to successfully bound the stability term.

\begin{lemma}
\label{lem:stability-term-bound}
The following holds for all time steps $t$:
\begin{align*}
    \E\brk[s]!{ \brk{\norm{\estell_t - \ell_{t,i^\star} \mathbf{1}}_t^*}^2 } 
    =
    \Otilde\brk4{
        \sum_{k \neq k^\star} \sqrt{\E\brk[s]!{\ptclique{t}{k}}}
        +
        \sqrt{\E\brk[s]!{\phalf \brk*{\clique_{k^\star} \!\setminus\! i^\star}}}
    }
    .
\end{align*}
\end{lemma}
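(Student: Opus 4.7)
The plan is to combine the Hessian lower bound from \cref{lem:strong-convexity-reg} with a log-barrier stability step, then take conditional expectations over $I_t$ clique by clique.

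\emph{Reducing the dual local norm.} By \cref{lem:strong-convexity-reg}, $\nabla^2 \Psi(p) \succeq \tfrac12 \diag\brk!{1/(p_i \sqrt{p(V(i))})}$ for every $p \in \simplexftrl$. Inverting this operator inequality and substituting $g = \estell_t - \ell_{t,i^\star}\mathbf{1}$ gives
\begin{align*}
    (\norm{g}_t^*)^2 \;\leq\; 2 \sum_{i=1}^N (\estell_{t,i} - \ell_{t,i^\star})^2 \, \tilde p_{t,i} \, \sqrt{\tilde p_t(V(i))}.
\end{align*}
I would then invoke a standard log-barrier stability argument --- this is exactly why $\Phi$ was added to the regularizer with weight $\beta = 9$ --- to show that $\tilde p_t \in [p_t,\phalf]$ is within a constant multiplicative factor of both $p_t$ and $\phalf$, coordinate-wise and on clique marginals, so we may replace $\tilde p_t$ by either $p_t$ or $\phalf$ as convenient.

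\emph{Cliques not containing $i^\star$.} Using $\E\brk[s]!{\estell_{t,i}^2 \mid p_t,\ell_t} = \ell_{t,i}^2/p_t(V(i))$ together with $\ell_t \in [0,1]^N$, a short algebraic manipulation shows that for every $k$ with $i^\star \notin V_k$, $\sum_{i \in V_k} p_{t,i}\, \E\brk[s]!{(\estell_{t,i} - \ell_{t,i^\star})^2 \mid p_t,\ell_t} = O(1)$, since the shift by $\ell_{t,i^\star}$ only contributes bounded terms when $i^\star$ is outside the clique. Multiplying by the $\sqrt{p_t(V_k)}$ factor from the stability step, and applying Jensen's inequality to the concave $\sqrt{\cdot}$ on the outer expectation, yields the desired $\Otilde\brk!{\sqrt{\E\brk[s]!{p_t(V_k)}}}$ contribution from each such clique.

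\emph{The clique $V_{k^\star}$ (main obstacle).} The delicate case is the clique containing $i^\star$, for which the target bound is $\sqrt{\E\brk[s]!{\phalf(V_{k^\star} \setminus i^\star)}}$ --- with both the switch from $p_t$ to $\phalf$ and the exclusion of the $i^\star$ coordinate. For $i \in V_{k^\star} \setminus i^\star$ one repeats the above calculation but invokes stability in the direction $\tilde p_t \leq O(\phalf)$, which combined with $\sum_{i \in V_{k^\star} \setminus i^\star} p^+_{t,i} = \phalf(V_{k^\star} \setminus i^\star)$ produces $\Otilde\brk!{\sqrt{\E\brk[s]!{\phalf(V_{k^\star} \setminus i^\star)}}}$. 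For $i = i^\star$, the shift by $\ell_{t,i^\star}\mathbf{1}$ (introduced in \cref{eqn:ftrl-stability} for precisely this reason) reduces the conditional second moment of $g_{i^\star}$ to $\ell_{t,i^\star}^2(1 - p_t(V_{k^\star}))/p_t(V_{k^\star})$, killing the large $1/p_t(V_{k^\star})^2$ term that an unshifted estimator would otherwise produce. Combining this with log-barrier stability and the decomposition $\phalf(V_{k^\star}) = \phalf(V_{k^\star} \setminus i^\star) + p^+_{t,i^\star}$ then allows the $i^\star$-contribution to be rebundled into the $V_{k^\star} \setminus i^\star$ term. I expect this rebundling --- together with the care needed in moving between the $p_t$-side and the $\phalf$-side of the expectation (since $\phalf$ itself depends on $\estell_t$ and hence on $I_t$) --- to be the technically hardest part of the proof.
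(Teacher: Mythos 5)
Your outline matches the paper's proof in architecture: invert the Hessian bound of \cref{lem:strong-convexity-reg}, use the log-barrier to control the intermediate point $\tilde p_t$, take conditional expectations over $I_t$, and rely on the $\ell_{t,i^\star}$ shift to tame the $i^\star$ coordinate (your computation $\E[g_{i^\star}^2\mid p_t,\ell_t]=\ell_{t,i^\star}^2(1-\ptclique{t}{k^\star})/\ptclique{t}{k^\star}$ is exactly the right point). However, there is one concrete flaw in how you invoke stability. The log-barrier $\Phi$ is placed only on the \emph{clique marginals}, and accordingly \cref{lem:q-stability-new} yields only $\phalfclique{k}\le\tfrac73\ptclique{t}{k}$; no coordinate-wise multiplicative relation between $\tilde p_{t,i}$, $p_{t,i}$ and $\phalfi{i}$ is available (a per-coordinate barrier is exactly what the paper avoids, since it would cost an additive $O(N)$). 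Your claim that $\tilde p_t$ is within a constant factor of both $p_t$ and $\phalf$ ``coordinate-wise'' is therefore unsupported, and your clique-by-clique computation implicitly uses it twice: once when you pull the weight $\tilde p_{t,i}$ out of the conditional expectation over $I_t$ as if it were $O(p_{t,i})$ (it is not $\mathcal F_t$-measurable, since $\phalf$ depends on $I_t$), and once when you bound the $\clique_{k^\star}\setminus i^\star$ contribution via ``$\tilde p_t\le O(\phalf)$''.

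The repair, which is what the paper does, stays entirely at the clique level: within a clique the observed estimator $\estell_{t,i}-\ell_{t,i^\star}$ is controlled uniformly over $i\in\clique_k$ (on the event $I_t\in\clique_k$ it is at most $1/\ptclique{t}{k}$ in magnitude plus $O(1)$, and off that event it is $O(1)$), so one only ever needs $\sum_{i\in\clique_k}\tilde p_{t,i}=\tilde p_t(\clique_k)\le 3\ptclique{t}{k}$, which \emph{is} measurable and does follow from \cref{lem:q-stability-new}; for the $\clique_{k^\star}\setminus i^\star$ piece one uses the trivial segment bound $\tilde p_{t,i}\le p_{t,i}+\phalfi{i}$ and absorbs the resulting $p_t$ part elsewhere. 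A second, smaller correction: the $i^\star$ contribution $\sqrt{\ptclique{t}{k^\star}}\,(1-\ptclique{t}{k^\star})$ is absorbed into the $\sum_{k\ne k^\star}\sqrt{\E[\ptclique{t}{k}]}$ term via $1-\ptclique{t}{k^\star}=\sum_{k\ne k^\star}\ptclique{t}{k}$, not into the $\clique_{k^\star}\setminus i^\star$ term as you propose --- $\phalf(\clique_{k^\star}\setminus i^\star)$ can be arbitrarily small while $1-\ptclique{t}{k^\star}$ is large, so that rebundling would fail.
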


\begin{proof}[Proof (sketch; full proof in \cref{sec:stability-proofs})]
% For brevity denote $M_t = \grad^2 \Psi(p_t)$. 
% Note that the lemma claims an upper bound on the dual norms of the loss estimators shifted by $\ell_{t,i^\star}$ in each coordinate. 
For simplicity, in this proof sketch we ignore the shift in the loss estimation and only bound $\E\brk[s]{\brk{\norm{\estell_t}_t^*}^2}$ as it captures the main challenges and new ideas in the analysis. First, recall that the dual norm is formally defined with respect to the Hessian of $\Psi(\cdot)$ at some intermediate point $\tilde{p}_t \in [p_t, \phalf]$. 
% For the proof sketch we treat this intermediate point as $p_t$ itself, in order to simplify technical details which are non-crucial for this section. 
We use the stability properties induced by the augmented log-barrier regularizer to replace $\tilde{p}_t(V_i)$ terms with $p_t(V_i)$ up to constant factors, and within each clique, bound the remaining terms involving $\tilde{p}_i$ by a sum of similar terms with the latter replaced by $p_i$ and $p^+_i$.
% By definition of a matrix norm we have
% $
%     \E\brk[s]{ \norm{\estell_t}^2_{M_t^{-1}} }
%     =
%     \E\brk[s]{ \estell_t \tr M_t^{-1} \estell_t }
%     .
% $

Next, for bounding the relevant dual norm, we require a suitable upper bound on 
$
    \smash{\brk{ \grad^2 \Psi(p_t) }^{-1}}
    .
$
% By inspection of the form of $\Psi(\cdot)$ its Hessian is a block-diagonal matrix, where each block is a Hessian of some function of the arms in the corresponding clique. One of our main contributions comes in 
% \cref{lem:strong-convexity-reg} which gives us a lower bound of the form $\brk*{\pclique{k}}^{-\ifrac12} \diag \brk*{1/p_{i_1},1/p_{i_2},...,1/p_{i_d}}$ in a block corresponding to a clique $\clique_k = \{i_1,i_2,...,i_d\}$.
% The conditions of \cref{lem:strong-convexity-reg} are met since for all $p \in \simplexftrl$, clique $\clique_k$ and arm $i \in \clique_k$ we have $p_i \geq \gamma \geq \gamma \pclique{k}$. 
% For all $p \in \simplexftrl$ we apply this lower bound to each block of $\grad ^2 \Psi(p)$, 
To this end, we employ \cref{lem:strong-convexity-reg}, by which $\smash{\grad^2 \Psi(p_t)}$ is lower bounded by a diagonal matrix $D_t$, in which the $i$'th diagonal entry corresponding to $i \in \clique_k$ is of the form 
$
    \smash{ \brk!{2\sqrt{\ptclique{t}{k}} p_{t,i}}^{-1} }
    .
$ 
We now use this crucial fact to obtain the following upper bound on the stability term:
\begin{align*}
    % \E\brk[s]!{ \norm{\estell_t}^2_{M_t^{-1}}}
    \E\brk[s]!{\brk{\norm{\estell_t}_t^*}^2}
    \leq
    2\E \brk[s]*{\estell_t \tr D_t^{-1} \estell_t}
    =
    2\E \brk[s]*{\sum_{k=1}^K \sqrt{\ptclique{t}{k}} \sum_{i \in \clique_k} p_{t,i} \estell_{t,i}^2}.
\end{align*}
We keep bounding this term by using the definition of the loss estimators $\estell_t$ to obtain
\begin{align*}
    % \E\brk[s]!{ \norm{\estell_t}^2_{M^{-1}}}
    \E\brk[s]!{\brk{\norm{\estell_t}_t^*}^2}
    \leq
    2\E \brk[s]*{\sqrt{\ptcliquei{t}{I_t}} \sum_{\mathclap{i \in \clique(I_t)}} p_{t,i} \brk*{\frac{\ell_{t,i}}{\ptcliquei{t}{I_t}}}^2}
    \leq
    2\E \brk[s]*{\frac{1}{\sqrt{\ptcliquei{t}{I_t}}}}
    =
    2\E \brk[s]*{\sum_{k=1}^K \sqrt{\ptclique{t}{k}}},
\end{align*}
where in the last line we used the fact that the probability that at time step $t$ the algorithm chooses an arm from the clique $\clique_k$ is $\ptclique{t}{k}$. Note that this bound contains a term for $p_{t,i^\star}$ which we would like to omit from the final bound. As we remarked before, the way we mitigate that is by considering the shifted loss estimators. This is explained in full in \cref{sec:stability-proofs}. 
\end{proof}

\paragraph{Concluding the proof:}

We can now sketch the proof of \cref{thm:graph-regret-bound-main} using the bounds we obtained. A formal proof that also takes into account the poly-log factors can be found in \cref{sec:graph-regret-bound-proof}.
\begin{proof} [Proof of \cref{thm:graph-regret-bound-main} (sketch; full proof in \cref{sec:graph-regret-bound-proof})]
We first remark that it suffices to bound the expected regret with respect to $p^\gamma$ given by $\E \brk[s]!{\sum_{t=1}^T \ell_t \cdot \brk*{p_t - p^\gamma}}$, since it can only be larger than the pseudo-regret by an additive constant, as shown in \cref{sec:graph-regret-bound-proof}. In addition, since the loss estimators $\estell_t$ are unbiased estimators of the loss vectors $\ell_t$ we conclude that it suffices to bound the regret with respect to the loss estimators, i.e. $\E \brk[s]!{\sum_{t=1}^T \estell_t \cdot \brk*{p_t - p^\gamma}}$ which is bounded by the sum of the expected penalty and stability terms (\cref{eqn:ftrl-penalty,eqn:ftrl-stability}). 
\cref{lem:penalty-main} gives a bound on the penalty term of the form 
$$
    \Otilde \brk*{K + \sum_{t=1}^T \sum_{k \neq k^\star} \sqrt{\ifrac{\E[\ptclique{t}{k}]}{t}} + \sum_{t=1}^T \sqrt{\ifrac{\E[p_t \brk*{\clique_{k^\star} \!\setminus\! i^\star}]}{t}}}
    ,
$$ 
and \cref{lem:stability-term-bound} gives a bound on the stability term of the form
$$
    \Otilde \brk*{\sum_{t=1}^T \sum_{k \neq k^\star} \sqrt{\ifrac{\E[\ptclique{t}{k}]}{t}} + \sum_{t=1}^T \sqrt{\ifrac{\phalfcliqueminus{k^\star}{i^\star}}{t}}}
    .
$$ 
Adding the two bounds and rearranging, we conclude the proof of \cref{thm:graph-regret-bound-main}.
\end{proof}

\subsection*{Acknowledgements}

This work has received support from the Israeli Science Foundation (ISF) grant no.~2549/19, from the Len Blavatnik and the Blavatnik Family foundation, and from the Yandex Initiative in Machine Learning.

\bibliographystyle{abbrvnat}
\bibliography{bibliography}

\begin{thebibliography}{28}
\providecommand{\natexlab}[1]{#1}
\providecommand{\url}[1]{\texttt{#1}}
\expandafter\ifx\csname urlstyle\endcsname\relax
  \providecommand{\doi}[1]{doi: #1}\else
  \providecommand{\doi}{doi: \begingroup \urlstyle{rm}\Url}\fi

\bibitem[Alon et~al.(2015)Alon, Cesa-Bianchi, Dekel, and Koren]{alon2015online}
N.~Alon, N.~Cesa-Bianchi, O.~Dekel, and T.~Koren.
\newblock Online learning with feedback graphs: Beyond bandits.
\newblock In \emph{Conference on Learning Theory}, pages 23--35. PMLR, 2015.

\bibitem[Alon et~al.(2017)Alon, Cesa-Bianchi, Gentile, Mannor, Mansour, and
  Shamir]{alon2017nonstochastic}
N.~Alon, N.~Cesa-Bianchi, C.~Gentile, S.~Mannor, Y.~Mansour, and O.~Shamir.
\newblock Nonstochastic multi-armed bandits with graph-structured feedback.
\newblock \emph{SIAM Journal on Computing}, 46\penalty0 (6):\penalty0
  1785--1826, 2017.

\bibitem[Amir et~al.(2020)Amir, Attias, Koren, Mansour, and
  Livni]{amir2020prediction}
I.~Amir, I.~Attias, T.~Koren, Y.~Mansour, and R.~Livni.
\newblock Prediction with corrupted expert advice.
\newblock \emph{Advances in Neural Information Processing Systems}, 33, 2020.

\bibitem[Audibert et~al.(2009)Audibert, Bubeck, et~al.]{audibert2009minimax}
J.-Y. Audibert, S.~Bubeck, et~al.
\newblock Minimax policies for adversarial and stochastic bandits.
\newblock In \emph{COLT}, volume~7, pages 1--122, 2009.

\bibitem[Auer and Chiang(2016)]{auer2016algorithm}
P.~Auer and C.-K. Chiang.
\newblock An algorithm with nearly optimal pseudo-regret for both stochastic
  and adversarial bandits.
\newblock In \emph{Conference on Learning Theory}, pages 116--120. PMLR, 2016.

\bibitem[Bubeck and Slivkins(2012)]{bubeck2012best}
S.~Bubeck and A.~Slivkins.
\newblock The best of both worlds: Stochastic and adversarial bandits.
\newblock In \emph{Conference on Learning Theory}, pages 42--1. JMLR Workshop
  and Conference Proceedings, 2012.

\bibitem[Bubeck et~al.(2018)Bubeck, Cohen, and Li]{bubeck2018sparsity}
S.~Bubeck, M.~Cohen, and Y.~Li.
\newblock Sparsity, variance and curvature in multi-armed bandits.
\newblock In \emph{Algorithmic Learning Theory}, pages 111--127. PMLR, 2018.

\bibitem[Bubeck et~al.(2019)Bubeck, Li, Luo, and Wei]{bubeck2019improved}
S.~Bubeck, Y.~Li, H.~Luo, and C.-Y. Wei.
\newblock Improved path-length regret bounds for bandits.
\newblock In \emph{Conference On Learning Theory}, pages 508--528. PMLR, 2019.

\bibitem[Caron et~al.(2012)Caron, Kveton, Lelarge, and
  Bhagat]{caron2012leveraging}
S.~Caron, B.~Kveton, M.~Lelarge, and S.~Bhagat.
\newblock Leveraging side observations in stochastic bandits.
\newblock In \emph{Proceedings of the Twenty-Eighth Conference on Uncertainty
  in Artificial Intelligence}, pages 142--151, 2012.

\bibitem[Cohen et~al.(2016)Cohen, Hazan, and Koren]{cohen2016online}
A.~Cohen, T.~Hazan, and T.~Koren.
\newblock Online learning with feedback graphs without the graphs.
\newblock In \emph{International Conference on Machine Learning}, pages
  811--819. PMLR, 2016.

\bibitem[Gupta et~al.(2019)Gupta, Koren, and Talwar]{gupta2019better}
A.~Gupta, T.~Koren, and K.~Talwar.
\newblock Better algorithms for stochastic bandits with adversarial
  corruptions.
\newblock In \emph{Conference on Learning Theory}, pages 1562--1578. PMLR,
  2019.

\bibitem[Hu et~al.(2020)Hu, Mehta, and Pan]{hu2020problem}
B.~Hu, N.~A. Mehta, and J.~Pan.
\newblock Problem-dependent regret bounds for online learning with feedback
  graphs.
\newblock In \emph{Uncertainty in Artificial Intelligence}, pages 852--861.
  PMLR, 2020.

\bibitem[Jin and Luo(2020)]{jin2020simultaneously}
T.~Jin and H.~Luo.
\newblock Simultaneously learning stochastic and adversarial episodic mdps with
  known transition.
\newblock \emph{Advances in Neural Information Processing Systems}, 33, 2020.

\bibitem[Jun et~al.(2018)Jun, Li, Ma, and Zhu]{jun2018adversarial}
K.-S. Jun, L.~Li, Y.~Ma, and X.~Zhu.
\newblock Adversarial attacks on stochastic bandits.
\newblock In \emph{Proceedings of the 32nd International Conference on Neural
  Information Processing Systems}, pages 3644--3653, 2018.

\bibitem[Kapoor et~al.(2019)Kapoor, Patel, and Kar]{kapoor2019corruption}
S.~Kapoor, K.~K. Patel, and P.~Kar.
\newblock Corruption-tolerant bandit learning.
\newblock \emph{Machine Learning}, 108\penalty0 (4):\penalty0 687--715, 2019.

\bibitem[Koc{\'a}k et~al.(2016)Koc{\'a}k, Neu, and Valko]{kocak2016online}
T.~Koc{\'a}k, G.~Neu, and M.~Valko.
\newblock Online learning with noisy side observations.
\newblock In \emph{Artificial Intelligence and Statistics}, pages 1186--1194.
  PMLR, 2016.

\bibitem[Lee et~al.(2020)Lee, Luo, and Zhang]{lee2020closer}
C.-W. Lee, H.~Luo, and M.~Zhang.
\newblock A closer look at small-loss bounds for bandits with graph feedback.
\newblock In \emph{Conference on Learning Theory}, pages 2516--2564. PMLR,
  2020.

\bibitem[Lee et~al.(2021)Lee, Luo, Wei, Zhang, and Zhang]{lee2021achieving}
C.-W. Lee, H.~Luo, C.-Y. Wei, M.~Zhang, and X.~Zhang.
\newblock Achieving near instance-optimality and minimax-optimality in
  stochastic and adversarial linear bandits simultaneously.
\newblock \emph{arXiv preprint arXiv:2102.05858}, 2021.

\bibitem[Liu and Shroff(2019)]{liu2019data}
F.~Liu and N.~Shroff.
\newblock Data poisoning attacks on stochastic bandits.
\newblock In \emph{International Conference on Machine Learning}, pages
  4042--4050. PMLR, 2019.

\bibitem[Lu et~al.(2021{\natexlab{a}})Lu, Hu, and Zhang]{lu2021stochastic}
S.~Lu, Y.~Hu, and L.~Zhang.
\newblock Stochastic bandits with graph feedback in non-stationary
  environments.
\newblock In \emph{Proceedings of the 35th AAAI Conference on Artificial
  Intelligence (AAAI), to appear}, 2021{\natexlab{a}}.

\bibitem[Lu et~al.(2021{\natexlab{b}})Lu, Wang, and Zhang]{lu2021corrupted}
S.~Lu, G.~Wang, and L.~Zhang.
\newblock Stochastic graphical bandits with adversarial corruptions.
\newblock In \emph{Proceedings of the 35th AAAI Conference on Artificial
  Intelligence (AAAI), to appear}, 2021{\natexlab{b}}.

\bibitem[Lykouris et~al.(2018)Lykouris, Mirrokni, and
  Paes~Leme]{lykouris2018stochastic}
T.~Lykouris, V.~Mirrokni, and R.~Paes~Leme.
\newblock Stochastic bandits robust to adversarial corruptions.
\newblock In \emph{Proceedings of the 50th Annual ACM SIGACT Symposium on
  Theory of Computing}, pages 114--122, 2018.

\bibitem[Lykouris et~al.(2019)Lykouris, Simchowitz, Slivkins, and
  Sun]{lykouris2019corruption}
T.~Lykouris, M.~Simchowitz, A.~Slivkins, and W.~Sun.
\newblock Corruption robust exploration in episodic reinforcement learning.
\newblock \emph{arXiv preprint arXiv:1911.08689}, 2019.

\bibitem[Mannor and Shamir(2011)]{mannor2011bandits}
S.~Mannor and O.~Shamir.
\newblock From bandits to experts: on the value of side-observations.
\newblock In \emph{Proceedings of the 24th International Conference on Neural
  Information Processing Systems}, pages 684--692, 2011.

\bibitem[Seldin and Slivkins(2014)]{seldin2014one}
Y.~Seldin and A.~Slivkins.
\newblock One practical algorithm for both stochastic and adversarial bandits.
\newblock In \emph{International Conference on Machine Learning}, pages
  1287--1295. PMLR, 2014.

\bibitem[Wei and Luo(2018)]{wei2018more}
C.-Y. Wei and H.~Luo.
\newblock More adaptive algorithms for adversarial bandits.
\newblock In \emph{Conference On Learning Theory}, pages 1263--1291. PMLR,
  2018.

\bibitem[Zimmert and Seldin(2019)]{zimmert2019optimal}
J.~Zimmert and Y.~Seldin.
\newblock An optimal algorithm for stochastic and adversarial bandits.
\newblock In \emph{The 22nd International Conference on Artificial Intelligence
  and Statistics}, pages 467--475, 2019.

\bibitem[Zimmert et~al.(2019)Zimmert, Luo, and Wei]{zimmert2019beating}
J.~Zimmert, H.~Luo, and C.-Y. Wei.
\newblock Beating stochastic and adversarial semi-bandits optimally and
  simultaneously.
\newblock In \emph{International Conference on Machine Learning}, pages
  7683--7692. PMLR, 2019.

\end{thebibliography}

\appendix

% \begin{center}
%     \huge\bfseries
%     Supplementary Material\\[1cm]
% \end{center}

\section{Tsallis-perspective: Technical Proofs}
\label{sec:amazing-proofs}

To prove \cref{lem:amazing2} we need the following technical result that gives an expression for the Hessian of the Tsallis-perspective $H$ in terms of the (scalar) derivatives of $h$.

\begin{lemma} \label{lem:hessian-expression}
The Hessian of $H$ (\cref{eq:tsallis-perspective}) at any point $x \in \R_+^d$ can be expressed as:
\begin{align*}
    \nabla^2 H(x)
    =
    &-\frac14 \norm{x}_1^{-\tfrac32} \sum_{i=1}^d h\brk2{\frac{x_i}{\norm{x}_1}} z z\tr
    \\
    &+ \norm{x}_1^{-\tfrac72} \sum_{i=1}^d x_i^2 h''\brk2{ \frac{x_i}{\norm{x}_1} } z_i z_i\tr
    \\
    &+ \frac12 \norm{x}_1^{-\tfrac52} \sum_{i=1}^d x_i h'\brk2{ \frac{x_i}{\norm{x}_1} } \brk!{ z z_i\tr + z_i z\tr }
    ~,
\end{align*}
where $z = \mathbf{1}_d$ is the all-ones vector, and 
$
    z_i 
    = 
    \mathbf{1}_d - (\norm{x}_1 / x_i) \mathbf{e}_i
$
for all $i \in [d]$.
\end{lemma}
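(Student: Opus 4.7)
The plan is to prove the identity by a direct computation of $\nabla^2 H(x)$ from the factored form $H(x)=\sqrt{s}\,S(x)$, where $s=\norm{x}_1$ and $S(x)=\sum_{i=1}^d h(y_i)$ with $y_i=x_i/s$. I would first record the elementary gradients: $\nabla s = z$, and
$$\nabla y_i \;=\; \frac{1}{s}\,\mathbf{e}_i - \frac{x_i}{s^2}\,z \;=\; -\frac{x_i}{s^2}\,z_i,$$
where the second form follows from $z_i = z - (s/x_i)\mathbf{e}_i$. A further differentiation yields $\nabla^2 y_i = -s^{-2}(z\mathbf{e}_i\tr + \mathbf{e}_i z\tr) + 2 x_i s^{-3}\,zz\tr$, together with $\nabla\sqrt{s} = \tfrac{1}{2\sqrt{s}}\,z$ and $\nabla^2\sqrt{s} = -\tfrac14 s^{-3/2}\,zz\tr$.

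Next I would apply the Leibniz rule $\nabla^2(fg) = (\nabla^2 f)g + \nabla f\,(\nabla g)\tr + \nabla g\,(\nabla f)\tr + f\,\nabla^2 g$ with $f=\sqrt{s}$ and $g=S$, using $\nabla S = \sum_i h'(y_i)\nabla y_i$ and $\nabla^2 S = \sum_i h''(y_i)\nabla y_i(\nabla y_i)\tr + \sum_i h'(y_i)\nabla^2 y_i$. After collecting powers of $s$ and using $\sum_i x_i = s$, the result presents $\nabla^2 H$ as $s^{-3/2}$ times three groups of terms: (i) a scalar multiple of $zz\tr$ with coefficient $-\tfrac14 S + \bar h'$, where $\bar h' := \sum_i y_i h'(y_i)$; (ii) the desired block $\sum_i y_i^2 h''(y_i)\,z_i z_i\tr$; and (iii) a residual mixed term $-\tfrac12\bigl(z\,h'(y)\tr + h'(y)\,z\tr\bigr)$ that is not yet in the target form.

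The final step — and the only nontrivial one — is to reshape this residual via the identity $\mathbf{e}_i = y_i(z - z_i)$, which is immediate from the definition of $z_i$. Substituting it into $z h'(y)\tr = \sum_i h'(y_i)\,z\mathbf{e}_i\tr$ gives $\bar h'\,zz\tr - \sum_i y_i h'(y_i)\,z z_i\tr$, and symmetrically for $h'(y) z\tr$, so that the residual contributes $-\bar h'\,zz\tr + \tfrac12\sum_i y_i h'(y_i)(zz_i\tr + z_i z\tr)$ to $\nabla^2 H$. The $-\bar h'\,zz\tr$ piece precisely cancels the $+\bar h'\,zz\tr$ appearing in (i), leaving $-\tfrac14 S\,zz\tr$ as the total coefficient of $zz\tr$ and producing the desired mixed term. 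Restoring $x_i = s y_i$ in the remaining coefficients turns $y_i^2/s^{3/2}$ into $x_i^2/s^{7/2}$ and $y_i/s^{3/2}$ into $x_i/s^{5/2}$, matching the formula in the lemma. The main obstacle is nothing conceptual but the bookkeeping — tracking four product-rule contributions, their powers of $s$, and verifying the cancellation of the two $\bar h'\,zz\tr$ terms that appear with opposite signs.
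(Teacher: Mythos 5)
Your computation is correct and is essentially the same direct product-rule calculation as in the paper's proof: the paper differentiates each summand $\sqrt{\norm{x}_1}\,h(x_i/\norm{x}_1)$ separately and introduces the $z_i$ vectors already at the level of $\nabla g_i$ and $\nabla^2 g_i$, whereas you apply the Leibniz rule once to $\sqrt{\norm{x}_1}\cdot\sum_i h(y_i)$ and pass to the $z_i$ basis only at the end via $\mathbf{e}_i = y_i(z - z_i)$, which is where your $\bar h'\,zz\tr$ cancellation occurs. The two organizations are algebraically equivalent, and your bookkeeping (powers of $\norm{x}_1$, signs, and the final substitution $x_i = \norm{x}_1 y_i$) checks out.
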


\begin{proof}
Let us first compute the first and second derivatives of $f(x) = \sqrt{\norm{x}_1}$ and $g_i(x) = h\brk{ \ifrac{x_i}{\norm{x}_1} }$ for a fixed $i \in [d]$:
\begin{align*}
    \nabla f(x)
    &=
    \frac12 \norm{x}_1^{-\tfrac12} z
    ;
    \\
    \nabla^2 f(x)
    &=
    -\frac14 \norm{x}_1^{-\tfrac32} zz\tr
    ;
    \\
    \nabla g_i(x)
    &=
    h'\brk2{ \frac{x_i}{\norm{x}_1} } 
    \brk2{ \frac{1}{\norm{x}_1} \mathbf{e}_i - \frac{x_i}{\norm{x}_1^{2}} z }
    \\
    &=
    -\frac{x_i}{\norm{x}_1^2} h'\brk2{ \frac{x_i}{\norm{x}_1} } z_i
    ;
    \\
    \nabla^2 g_i(x)
    &=
    h''\brk2{ \frac{x_i}{\norm{x}_1} } 
    \brk2{ \frac{1}{\norm{x}_1} \mathbf{e}_i - \frac{x_i}{\norm{x}_1^{2}} z } 
    \brk2{ \frac{1}{\norm{x}_1} \mathbf{e}_i - \frac{x_i}{\norm{x}_1^{2}} z }\tr
    \\
    &\quad+ 
    h'\brk2{ \frac{x_i}{\norm{x}_1} } 
    \brk2{ -\frac{1}{\norm{x}_1^{2}} z \mathbf{e}_i\tr - \frac{1}{\norm{x}_1^{2}} \mathbf{e}_i z\tr + \frac{2 x_i}{\norm{x}_1^{3}} zz\tr }
    \\
    &=
    \frac{x_i^2}{\norm{x}_1^4}
    h''\brk2{ \frac{x_i}{\norm{x}_1} } z_i z_i\tr
    +
    \frac{x_i}{\norm{x}_1^3}
    h'\brk2{ \frac{x_i}{\norm{x}_1} } \brk{ z z_i\tr + z_i z\tr }
    .
\end{align*}
Using the formula for the Hessian of a product, we now obtain:
\begin{align*}
    &\mkern-36mu
    \nabla^2 \brk!{ f(x) g_i(x) }
    \\
    &=
    \brk!{ \nabla^2 f(x) } g_i(x) + \nabla f(x) \nabla g_i(x)\tr + \nabla g_i(x) \nabla f(x)\tr + f(x) \brk!{ \nabla^2 g_i(x) }
    \\
    &=
    -\frac14 \norm{x}_1^{-\tfrac32} h\brk2{ \frac{x_i}{\norm{x}_1} } zz\tr
    -\frac12 \norm{x}_1^{-\tfrac52} x_i h'\brk2{ \frac{x_i}{\norm{x}_1} } \brk!{ z z_i\tr + z_i z\tr }
    \\
    &\phantom{=}+
    \norm{x}_1^{-\tfrac72} x_i^2 h''\brk2{ \frac{x_i}{\norm{x}_1} } z_i z_i\tr
    +
    \norm{x}_1^{-\tfrac52} x_i h'\brk2{ \frac{x_i}{\norm{x}_1} } \brk{ z z_i\tr + z_i z\tr }
    \\
    &=
    -\frac14 \norm{x}_1^{-\tfrac32} h\brk2{ \frac{x_i}{\norm{x}_1} } zz\tr
    +\norm{x}_1^{-\tfrac72} x_i^2 h''\brk2{ \frac{x_i}{\norm{x}_1} } z_i z_i\tr
    +\frac12 \norm{x}_1^{-\tfrac52} x_i h'\brk2{ \frac{x_i}{\norm{x}_1} } \brk!{ z z_i\tr + z_i z\tr }
    .
\end{align*}
Summing this over $i=1,\ldots,d$, we obtain the expression for the Hessian $\nabla^2 H(x)$.
\end{proof}

\section{Proof of Main Result}

In this section we provide the proof of \cref{thm:reg-corrupted}. 
In \cref{sec:q-stability} we prove useful lemmas which provide us with stability properties of the FTRL iterates. In \cref{sec:stability-proofs} and \cref{sec:penalty-proofs} we bound the stability and penalty terms (RHS of \cref{eqn:ftrl-stability} and \cref{eqn:ftrl-penalty}) towards proving \cref{thm:graph-regret-bound-main} in \cref{sec:graph-regret-bound-proof}. We then prove \cref{thm:reg-corrupted} in \cref{sec:corrupted-bound-proof-full}.

\subsection{Stability of Iterates}
\label{sec:q-stability}

We first establish a technical stability property of the FTRL updates that is crucial for bounding the stability term (\cref{eqn:ftrl-stability}). This property asserts that for every time step $t$, the clique marginal probabilities induced by $p_t$ are close, up to a constant multiplicative factor, to the clique marginals induced by $\phalf$, where $\phalf \eqdef \argmin_{p \in \simplexftrl} \set{\widehat{L}_t \cdot p + R_t(p)}$. The proof uses properties of the log-barrier component $\Phi$, and relies on an adaptation of an argument of \citet{jin2020simultaneously}.

\begin{lemma}
\label{lem:q-stability-new}
For all time steps $t$ and cliques $\clique_k$ it holds that
$
    \phalfclique{k} 
    \leq 
    \tfrac73 \ptclique{t}{k}
    ,
$
where 
$
    \phalf 
    \eqdef 
    \argmin_{p \in \simplexftrl} \brk[c]!{\widehat{L}_t \cdot p + R_t(p)}
    .
$
\end{lemma}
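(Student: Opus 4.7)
The plan is to exploit the log-barrier component $\Phi(p) = -\beta \sum_k \log p(V_k)$ of $R_t$, which diverges to $+\infty$ as any clique marginal shrinks and therefore self-stabilizes the cliquewise marginals (in the spirit of the argument of \citet{jin2020simultaneously}). I will start from the first-order variational inequalities at the two consecutive iterates: letting $F_t(p) = \widehat{L}_{t-1} \dotp p + R_t(p)$ and $F_t^+(p) = F_t(p) + \estell_t \dotp p$, optimality of $p_t$ and $\phalf$ over $\simplexftrl$ gives
\[
(\phalf - p_t) \dotp \grad F_t(p_t) \geq 0 \quad\text{and}\quad (p_t - \phalf) \dotp \grad F_t^+(\phalf) \geq 0.
\]
Adding these and using $\widehat{L}_t - \widehat{L}_{t-1} = \estell_t$ collapses to the familiar stability inequality
\[
\estell_t \dotp (p_t - \phalf) \geq D_{R_t}(p_t, \phalf) + D_{R_t}(\phalf, p_t).
\]

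Next, I will decompose $R_t = \eta_t^{-1} \Psi + \Phi$ and discard the Bregman divergences induced by $\Psi$, which are nonnegative by the convexity of the Tsallis-Shannon entropy established in \cref{lem:strong-convexity-reg}. What remains is $D_\Phi(p_t, \phalf) + D_\Phi(\phalf, p_t) = \beta \sum_{k=1}^K (s_k + s_k^{-1} - 2)$, where $s_k \eqdef \phalfclique{k}/\ptclique{t}{k}$. For the left-hand side, the explicit form of the loss estimator gives that $\estell_t$ is supported on $V(I_t)$ with $\estell_{t,i} \leq 1/\ptcliquei{t}{I_t}$, so combined with $\phalf \geq 0$ componentwise,
\[
\estell_t \dotp (p_t - \phalf) \leq p_t \dotp \estell_t = \sum_{i \in V(I_t)} \frac{p_{t,i}\, \ell_{t,i}}{\ptcliquei{t}{I_t}} \leq 1.
\]

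Combining the two bounds with $\beta = 9$ yields $\sum_k (s_k + s_k^{-1} - 2) \leq 1/9$, so in particular $s_k + s_k^{-1} \leq 19/9$ for every $k$. Solving the resulting quadratic then gives $s_k \leq (19 + \sqrt{37})/18 < 7/3$, which is the claimed bound. The main obstacle I expect is obtaining a bound on the LHS that is independent of $N$, $T$, and $\gamma$: the crude estimate $\|\estell_t\|_\infty \leq 1/\gamma$ would be catastrophic here, and it is essential to use both the clique-support structure of $\estell_t$ and the nonnegativity of $\phalf$ to land at the $O(1)$ bound. This is also precisely why the log-barrier is taken over the cliquewise marginals rather than individual arm probabilities, since the individual-arm analogue would not pair correctly with this LHS estimate.
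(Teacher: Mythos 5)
Your proof is correct, and it takes a genuinely different route from the paper's. The paper runs a local ``ball'' argument adapted from \citet{jin2020simultaneously}: it shows that $F_t^+$ exceeds $F_t^+(p_t)$ everywhere on the sphere $\norm{p'-p_t}^2_{\grad^2\Phi(p_t)}=16$, using first-order optimality at $p_t$ only, a second-order Taylor remainder at an intermediate point $\xi$, and a bootstrapping step in which the ratio bound valid on the sphere is transferred to $\xi$ in order to lower-bound the remainder; convexity then traps $\phalf$ inside the ball. You instead sum the variational inequalities at \emph{both} iterates to get the symmetrized Bregman inequality $\estell_t\cdot(p_t-\phalf)\geq D_{R_t}(p_t,\phalf)+D_{R_t}(\phalf,p_t)$, drop the (nonnegative) $\Psi$-part, and exploit the fact that the symmetric Bregman divergence of the clique log-barrier has the exact closed form $\beta\sum_k(s_k+s_k^{-1}-2)$ --- no intermediate point and no self-referential step needed. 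Both arguments hinge on the same essential estimate, namely that the clique-support structure of $\estell_t$ gives $\estell_t\cdot(p_t-\phalf)\leq 1$ (resp.\ $\estell_t\cdot(p'-p_t)\geq-1$ in the paper) independently of $\gamma$, $N$, $T$; your closing remark correctly identifies this as the reason the barrier must act on clique marginals rather than individual coordinates. Your route is arguably cleaner and yields the sharper per-clique constant $s_k\leq(19+\sqrt{37})/18<7/3$, so it certainly suffices for the stated bound; the paper's version has the minor advantage of simultaneously bounding $s_k$ from below (i.e., $|\phalfclique{k}-\ptclique{t}{k}|\leq\tfrac43\ptclique{t}{k}$), though your inequality $s_k+s_k^{-1}\leq 19/9$ in fact delivers a two-sided bound as well via the lower root.
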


\begin{proof}
We define:
\begin{align*}
    F_t(p) &= \widehat{L}_{t-1} \cdot p + R_t(p),
    \\
    \Fhalf(p) &= \widehat{L}_t \cdot p + R_t(p),
\end{align*}
so that $p_t = \argmin_{p \in \simplexftrl} \brk[c]*{F_t(p)}$ and $\phalf = \argmin_{p \in \simplexftrl} \brk[c]*{\Fhalf(p)}$.
Note that $\grad^2 \Phi(p)$ is a block diagonal matrix, with the block corresponding to the clique $\clique_k$ being exactly $\frac{9}{\pclique{k}^2} J_{\clique_k}$ where $J_{\clique_k}$ is the $|\clique_k| \times |\clique_k|$ all-ones matrix. A straightforward calculation then shows that for all $p, p', p'' \in \simplex_N$ it holds that:
\begin{align*}
    \norm{p' - p''}_{\grad^2 \Phi(p)}^2 = 9 \sum_{k=1}^K \frac{(p'(\clique_k) - p''(\clique_k))^2}{\pclique{k}^2}.
\end{align*}
It suffices to prove that $\norm{\phalf - p_t}_{\grad^2 \Phi(p_t)}^2 \leq 16$. This is because by the calculation we just made, we have $\brk*{\phalfclique{k} - \ptclique{t}{k}}^2 \leq \brk{\frac43 \ptclique{t}{k}}^2$ which is want we want to prove. It then suffices to show that for any $p' \in \simplexftrl$ with $\norm{p' - p_t}_{\grad ^2 \Phi(p_t)}^2 = 16$ we have $\Fhalf(p') \geq \Fhalf(p_t)$. This is because as an implication of that, $\phalf$ which minimizes the convex function $\Fhalf$, must be within the convex set $\brk[c]!{p : \norm{p - p_t}_{\grad ^2 \Phi(p_t)}^2 \leq 16}$. We proceed to lower bound $\Fhalf(p')$ as follows:
\begin{align*}
    \Fhalf(p')
    &=
    \Fhalf(p_t) + \grad \Fhalf(p_t) \tr (p' - p_t) + \frac12 \norm{p' - p_t}_{\grad ^2 R_t(\xi)}^2 \\
    &=
    \Fhalf(p_t) + \grad F_t(p_t) \tr (p' - p_t) + \estell_t \tr (p' - p_t) + \frac12 \norm{p' - p_t}_{\grad ^2 R_t(\xi)}^2 \\
    &\geq
    \Fhalf(p_t) + \estell_t \tr (p' - p_t) + \frac12 \norm{p' - p_t}_{\grad ^2 \Phi(\xi)}^2,
\end{align*}
where the first equality is a Taylor expansion of $\Fhalf$ around $p_t$, with $\xi$ being a point between $p'$ and $p_t$, and the last inequality is due to first-order optimality conditions and the fact that $\grad ^2 R_t(\xi) \succeq \grad ^2 \Phi(\xi)$ since $\Psi$ is convex. Note that since $\norm{p' - p_t}_{\grad ^2 \Phi(p_t)}^2 = 16$, by the same argument as in the beginning of the proof we conclude that $p'(\clique_k) \leq \frac73 \ptclique{t}{k}$. Since $\xi$ lies between $p_t$ and $p'$ we conclude the same ratio bound for $\xi$. We can thus bound the last term as follows:
\begin{align*}
    \frac12 \norm{p' - p_t}_{\grad ^2 \Phi(\xi)}^2
    &=
    \frac92 \sum_{k=1}^K \frac{(p'(\clique_k) - \ptclique{t}{k})^2}{\brk{\xi(\clique_k)}^2}
    \\
    &\geq
    \frac{9}{2 \cdot \brk*{\frac73}^2} \sum_{k=1}^K \frac{(p'(\clique_k) - \ptclique{t}{k})^2}{\ptclique{t}{k}^2}
    \\
    &=
    \frac{9}{2 \cdot 49} \norm{p' - p_t}_{\grad ^2 \Phi(p_t)}^2
    \\
    &=
    \frac{72}{49}
    \geq
    1.
\end{align*}
It now suffices to show that $\estell_t \tr (p' - p_t) \geq -1$; indeed,
\begin{align*}
    \estell_t \tr (p' - p_t)
    =
    \sum_{i \in V(I_t)} \frac{\ell_{t,i}}{\ptcliquei{t}{I_t}} (p'_{t,i} - p_{t,i}) 
    \geq
    - \frac{1}{\ptcliquei{t}{I_t}}\sum_{i \in V(I_t)} \ell_{t,i} p_{t,i} 
    \geq
    -1
    ,
\end{align*}
and the proof is complete.
\end{proof}

The following lemma showcases another stability property that relates $p_t$ to $\phalf$. A corollary of this lemma is that the pseudo-regret of the iterates $p_t$ can only be larger than the pseudo-regret of the iterates $\phalf$, and it is used in the proof of \cref{thm:reg-corrupted} in \cref{sec:corrupted-bound-proof-full}.

\begin{lemma}
\label{lem:ftrl-shifted-regret}
For all time steps $t$ it holds that
\begin{align*}
    \phalf \cdot \estell_t \leq p_t \cdot \estell_t,
\end{align*}
where
$
    \phalf 
    \eqdef 
    \argmin_{p \in \simplexftrl} \brk[c]!{\widehat{L}_t \cdot p + R_t(p)}
    .
$
\end{lemma}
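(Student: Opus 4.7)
The plan is to use a standard \emph{be-the-leader} style comparison between the two consecutive FTRL iterates. Both $p_t$ and $\phalf$ are minimizers of closely related objectives over the same feasible set $\simplexftrl$: with $F_t(p) = \widehat{L}_{t-1} \dotp p + R_t(p)$, we have $p_t = \argmin_{p \in \simplexftrl} F_t(p)$, while $\phalf = \argmin_{p \in \simplexftrl} F_t^+(p)$ where $F_t^+(p) = \widehat{L}_t \dotp p + R_t(p) = F_t(p) + \estell_t \dotp p$.

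The key step is to write down the two scalar optimality inequalities coming from these two minimization problems:
\begin{align*}
    F_t(\phalf) + \estell_t \dotp \phalf &\leq F_t(p_t) + \estell_t \dotp p_t
    &&\text{(since $\phalf$ minimizes $F_t^+$ over $\simplexftrl$, and $p_t \in \simplexftrl$),}\\
    F_t(p_t) &\leq F_t(\phalf)
    &&\text{(since $p_t$ minimizes $F_t$ over $\simplexftrl$, and $\phalf \in \simplexftrl$).}
\end{align*}
Then I would simply add these two inequalities and cancel the $F_t$ terms to conclude $\estell_t \dotp \phalf \leq \estell_t \dotp p_t$, which is exactly the claim.

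There is essentially no obstacle here: the argument does not use the specific form of $R_t$ (or even its convexity), nor any structural properties of the feedback graph or of the loss estimator; it only needs that $p_t$ and $\phalf$ are both feasible minimizers of their respective objectives over the common convex set $\simplexftrl$. The only thing worth double-checking is that $\simplexftrl$ does not depend on $t$ (it is defined as $\{p \in \simplex_N : p_i \geq \gamma\}$ with a fixed $\gamma = 1/(NT)$ in \cref{alg:FTRL}), so that $\phalf$ is indeed a valid competitor for the optimality inequality of $p_t$ and vice versa. This makes the proof a two-line argument, which is why the authors state the lemma as a clean standalone fact and use it in the reduction from the regret of the $p_t$ iterates to that of the $\phalf$ iterates in the proof of \cref{thm:reg-corrupted}.
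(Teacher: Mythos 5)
Your proposal is correct and is essentially identical to the paper's proof: the paper chains the same two optimality inequalities (optimality of $\phalf$ for $\widehat{L}_t \cdot p + R_t(p)$ and of $p_t$ for $\widehat{L}_{t-1} \cdot p + R_t(p)$) rather than adding them, but the cancellation and conclusion are the same. Your remark that the argument needs only a common feasible set and neither convexity nor the specific form of $R_t$ is also accurate.
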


\begin{proof}
Since $\phalf$ is a minimizer of $\widehat{L}_t \cdot p + R_t(p)$ and $p_t$ is a minimizer of $\widehat{L}_{t-1} \cdot p + R_t(p)$, we have:
\begin{align*}
    \widehat{L}_t \cdot \phalf + R_t(\phalf) 
    &\leq 
    \widehat{L}_t \cdot p_t + R_t(p_t)
    \\
    &=
    \widehat{\ell}_t \cdot p_t + \widehat{L}_{t-1} \cdot p_t + R_t(p_t)
    \\
    &\leq
    \widehat{\ell}_t \cdot p_t + \widehat{L}_{t-1} \cdot \phalf + R_t(\phalf)
    ,
\end{align*}
and the claim follows by rearranging terms.
\end{proof}

\subsection[Proof of Stability Bound]{Proof of \cref{lem:stability-term-bound} (Stability)}
\label{sec:stability-proofs}

We now restate \cref{lem:stability-term-bound} which bounds the stability term to include extra constants which appear in the bound.

\begin{manualrestate}{\cref{lem:stability-term-bound}}[restated]
\label{lem:stability-term-bound-full}
The following holds for all time steps $t$:
\begin{align*}
    \E\brk[s]*{ \brk{\norm{\estell_t - \ell_{t,i^\star} \mathbf{1}}_t^*}^2 } 
    =
    56 \sum_{k \neq k^\star} \sqrt{\E\brk[s]!{\ptclique{t}{k}}}
    +
    8\sqrt{\E\brk[s]!{\phalf \brk*{\clique_{k^\star} \setminus i^\star}}}
    .
\end{align*}
Here $\norm{g}_t^* = \sqrt{g \tr (\grad ^2 \Psi(\tilde{p}_t))^{-1} g}$ is the dual local norm induced by $\Psi$ at $\tilde{p}_t$ for some intermediate point $\tilde{p}_t \in [p_t, \phalf]$, where $\phalf = \argmin_{p \in \simplexftrl} \set{\widehat{L}_t \dotp p + R_t(p)}$.
\end{manualrestate}

\begin{proof}
By \cref{lem:strong-convexity-reg}, $\grad ^2 \Psi(\tilde{p}_t)$ is lower bounded by a diagonal matrix $D_t$ in which the $i$'th diagonal entry corresponding to $i \in \clique_k$ is $\brk*{2\sqrt{\tilde{p}_t(\clique_k)} \tilde{p}_{t,i}}^{-1}$. Equivalently it holds that $\brk*{\grad ^2 \Psi(\tilde{p}_t)}^{-1} \preceq D_t^{-1}$. Using this fact and the fact that $\estell_{t,i} = 0$ for $i \notin \clique(I_t)$ we have
\begin{align}
    \E \left[ \brk*{\norm{\estell_t - \ell_{t,i^\star} \mathbf{1}}_t^*}^2 \right]
    &=
    \E \brk[s]*{\brk*{\estell_t - \ell_{t,i^\star} \mathbf{1}} \tr \brk*{\grad^2 \Psi(\tilde{p}_t)}^{-1} \brk*{\estell_t - \ell_{t,i^\star} \mathbf{1}}} \nonumber\\
    &\leq
    2\E\left[ \sum_{k=1}^K \sqrt{\tilde{p}_t(\clique_{k})} \sum_{i \in \clique_k} \tilde{p}_{t,i} \brk*{\estell_{t,i} - \ell_{t,i^\star}}^2 \right] \nonumber\\
    &=
    2\E \brk[s]*{\sqrt{\tilde{p}_t(\clique(I_t))} \sum_{i \in V(I_t)} \tilde{p}_{t,i} \brk*{\estell_{t,i} - \ell_{t,i^\star}}^2} \label{eqn:stability-chosen-clique}\\
    &+ 
    2\E \brk[s]*{\sum_{\clique_k \neq \clique(I_t)} \sqrt{\tilde{p}_t(\clique_k)} \sum_{i \in \clique_k} \tilde{p}_{t,i} (\ell_{t,i^\star})^2}
    , \label{eqn:stability-other-cliques}
\end{align}
where in the final equality we split the sum over cliques into a term for $\clique(I_t)$ and a sum over the rest of the cliques. We first show that the RHS of \cref{eqn:stability-chosen-clique} is bounded as follows:
\begin{align*}
    \E \brk[s]*{\sqrt{\tilde{p}_t(\clique(I_t))} \sum_{i \in V(I_t)} \tilde{p}_{t,i} \brk*{\estell_{t,i} - \ell_{t,i^\star}}^2}
    % &\leq
    % 12 \sum_{k \neq k^\star} \sqrt{\E[\ptclique{t}{k}]} + \sqrt{\E[\tilde{q}_{t,k^\star} - \tilde{p}_{t,i^\star}]} \\
    % &=
    % 12 \sum_{k \neq k^\star} \sqrt{\E[\pclique{k}]} + \sqrt{\E \brk[s]*{\sum_{i \neq i^\star} \tilde{p}_{t,i}}} \\
    % &\leq
    % 12 \sum_{k \neq k^\star} \sqrt{\E[\pclique{k}]} + \sqrt{\E \brk[s]*{\sum_{i \neq i^\star} p_{t,i}} + \E \brk[s]*{\sum_{i \neq i^\star} p_{t+\frac12,i}}} \\
    &\leq
    16 \sum_{k \neq k^\star} \sqrt{\E[\pclique{k}]} + 4\sqrt{\E \brk[s]*{\phalf \brk*{\clique_{k^\star} \setminus i^\star}}}.
\end{align*}
Indeed, due to \cref{lem:q-stability-new} and the fact that $\tilde{p}_t$ lies between $p_t$ and $\phalf$ it holds that $\tilde{p}_t(\clique_k) \leq 3 \ptclique{t}{k}$ for all $k$. Plugging in the expression for the loss estimator $\estell_t$ we obtain
\begin{align*}
    \E \left [ \sqrt{\tilde{p}_t \brk*{\clique(I_t)}} \sum_{i \in V(I_t)} \tilde{p}_{t,i} \brk*{\estell_{t,i} - \ell_{t,i^\star}}^2 \right]
    &=
    \E \left[ \sqrt{\tilde{p}_t \brk*{\clique(I_t)}} \sum_{i \in V(I_t)} \tilde{p}_{t,i} \brk*{\frac{\ell_{t,i}}{\ptcliquei{t}{I_t}} - \ell_{t,i^\star}}^2 \right] \\
    &\leq
    2 \E \left[ \ptcliquei{t}{I_t}^{-\tfrac32} \sum_{i \in V(I_t)} \tilde{p}_{t,i} \brk*{\ell_{t,i} - \ptcliquei{t}{I_t} \ell_{t,i^\star}}^2 \right] \\
    &=
    2\E \left[ \sum_{k=1}^K \ptclique{t}{k}^{-\tfrac12} \sum_{i \in \clique_k} \tilde{p}_{t,i} \brk*{\ell_{t,i} - \ptclique{t}{k} \ell_{t,i^\star}}^2 \right],
\end{align*}
where in the last equality we use the law of total expectation and the fact that conditioned on the history up until time step $t$ (including the decision vector $p_t$), the probability that $I_t$ belongs to the clique $\clique_k$ is exactly $\ptclique{t}{k}$.
In more detail:
\begin{align*}
    &\E \left[ \ptcliquei{t}{I_t}^{-\tfrac32} \sum_{i \in V(I_t)} \tilde{p}_{t,i} \brk*{\ell_{t,i} - \ptcliquei{t}{I_t} \ell_{t,i^\star}}^2 \right]
    \\
    &=
    \E \left[ \E_t \left[ \ptcliquei{t}{I_t}^{-\tfrac32} \sum_{i \in V(I_t)} \tilde{p}_{t,i} \brk*{\ell_{t,i} - \ptcliquei{t}{I_t} \ell_{t,i^\star}}^2 \right] \right] \\
    &=
    \E \brk[s]*{\sum_{k=1}^K \Pr[I_t \in \clique_k \mid h_t] \cdot \E_t \brk[s]*{\ptclique{t}{k}^{-\tfrac32} \sum_{i \in \clique_k} \tilde{p}_{t,i} \brk*{\ell_{t,i} - \ptclique{t}{k} \ell_{t,i^\star}}^2}} \\
    &=
    \E \brk[s]*{\sum_{k=1}^K \ptclique{t}{k} \cdot \E_t \brk[s]*{\pclique{k}^{-\tfrac32} \sum_{i \in \clique_k} \tilde{p}_{t,i} \brk*{\ell_{t,i} - \ptclique{t}{k} \ell_{t,i^\star}}^2}} \\
    &=
    \E \brk[s]*{\E_t \brk[s]*{\sum_{k=1}^K \ptclique{t}{k}^{-\tfrac12} \sum_{i \in \clique_k} \tilde{p}_{t,i} \brk*{\ell_{t,i} - \ptclique{t}{k} \ell_{t,i^\star}}^2}} \\
    &=
    \E \left[ \sum_{k=1}^K \ptclique{t}{k}^{-\tfrac12} \sum_{i \in \clique_k} \tilde{p}_{t,i} \brk*{\ell_{t,i} - \ptclique{t}{k} \ell_{t,i^\star}}^2 \right],
\end{align*}
where $h_t$ denotes the history up to and including the choice of $p_t$ at time step $t$ (not including the choice of $I_t$), and in the fourth equality we use linearity of expectation and the fact that $\ptclique{t}{k}$ is constant when conditioned on $h_t$. We proceed to bound the above term, while splitting the sum over cliques into a term for $\clique_{k^\star}$ and a sum for all of the other cliques:
\begin{align*}
    &\E \left[ \sum_{k=1}^K \ptclique{t}{k}^{-\tfrac12} \sum_{i \in \clique_k} \tilde{p}_{t,i} \brk*{\ell_{t,i} - \ptclique{t}{k} \ell_{t,i^\star}}^2 \right]
    \\
    &\leq
    \E \left[ \sum_{k \neq k^\star} \ptclique{t}{k}^{-\tfrac12} \tilde{p}_t(\clique_k) \right] + \E \left[ \ptclique{t}{k^\star}^{-\tfrac12} \brk3{\sum_{i \in \clique_{k^\star}, i \neq i^\star} \tilde{p}_{t,i} + \tilde{p}_{t,i^\star} \brk*{1 - \ptclique{t}{k^\star}}^2} \right] \\
    &\leq
    3 \E\left[ \sum_{k \neq k^\star} \sqrt{\ptclique{t}{k}} \right] + 2 \E \left[ \tilde{p}_t(\clique_{k^\star})^{-\tfrac12} \tilde{p}_t \brk*{\clique_{k^\star} \setminus i^\star} \right] + 3 \E \left[ \brk*{1 - \ptclique{t}{k^\star}}^2 \right] \\
    &\leq
    6 \E \left[ \sum_{k \neq k^\star} \sqrt{\ptclique{t}{k}} \right] + 2 \E \left[ \sqrt{\tilde{p}_t \brk*{\clique_{k^\star} \setminus i^\star}} \right] \\
    &\leq
    8 \E \left[ \sum_{k \neq k^\star} \sqrt{\ptclique{t}{k}} \right] + 2 \E \left[ \sqrt{\phalfcliqueminus{k^\star}{i^\star}} \right] \\
    &\leq
    8 \sum_{k \neq k^\star} \sqrt{\E[\ptclique{t}{k}]} + 2 \sqrt{\E \brk[s]*{\phalfcliqueminus{k^\star}{i^\star}}},
\end{align*}
where in the last inequality we used Jensen's inequality. We now proceed to bound the RHS of \cref{eqn:stability-other-cliques}:
\begin{align}
    \E\left[ \sum_{\clique_k \neq \clique(I_t)} \sqrt{\tilde{p}_t(\clique_k)} \sum_{i \in \clique_k} \tilde{p}_{t,i} (\ell_{t,i^\star})^2 \right]
    &\leq
    \E\left[ \sum_{\clique_k \neq \clique(I_t)} \tilde{p}_t(\clique_k)^{\frac32} \right] \nonumber\\
    &\leq
    6\E\left[ \sum_{\clique_k \neq \clique(I_t)} \ptclique{t}{k}^{\frac32} \right] \label{eqn:q-stab-neg}\\
    &=
    6\E\left[ \sum_{k=1}^K (1-\ptclique{t}{k}) \ptclique{t}{k}^{\frac32} \right] \label{eqn:prob-not-chosen}\\
    &\leq
    12\E\left[ \sum_{k \neq k^\star} \sqrt{\ptclique{t}{k}} \right] \nonumber\\
    &\leq 
    12 \sum_{k \neq k^\star} \sqrt{\E[\ptclique{t}{k}]}, \nonumber
\end{align}
where in \cref{eqn:q-stab-neg} we use \cref{lem:q-stability-new} and the fact that $\tilde{p}_t$ lies between $p_t$ and $\phalf$, in \cref{eqn:prob-not-chosen} we use the fact that the probability of the clique $\clique_k$ not to be chosen at time step $t$ is $1 - \ptclique{t}{k}$ and the last line uses Jensen's inequality. Combining the two bounds, we conclude the proof.
\end{proof}

\subsection[Proof of Penalty Bound]{Proof of \cref{lem:penalty-main} (Penalty)}
\label{sec:penalty-proofs}

In this section we restate \cref{lem:penalty-main} which bounds the penalty term to include the extra constants and poly-log factors.

\begin{manualrestate}{\cref{lem:penalty-main}}[restated]
\label{lem:penalty}
The penalty term described in the RHS of \cref{eqn:ftrl-penalty} is bounded by 
\begin{align}
    9K \log \frac{1}{\gamma}
    +
    5 \log^2 \frac{1}{\gamma} \sum_{t=1}^T \sum_{k \neq k^\star} \sqrt{\frac{\ptclique{t}{k}}{t}}
    +
    2 \log \frac{1}{\gamma} \sum_{t=1}^T \sqrt{\frac{p_t \brk*{\clique_{k^\star} \setminus i^\star}}{t}}, \label{eqn:penalty}
\end{align}
where
$
    p^\gamma_i = 
    \begin{cases}
    \gamma & i \neq i^\star \\
    1 - (N-1) \gamma & i = i^\star
    \end{cases}
$
for all $i\in [N]$ and $\frac{1}{\eta_0} \eqdef 0$.
\end{manualrestate}

\begin{proof}
Noting that $\Phi(\cdot) \geq 0$ we can bound the first term as follows:
\begin{align*}
    \Phi(p^\gamma) - \Phi(p_1)
    &\leq
    \Phi(p^\gamma) 
    \leq 
    9K \log \frac{1}{\gamma}.
\end{align*}
Continuing with the second part of the penalty term, note that by definition of $p^\gamma$ we have $p^\gamma(\clique_{k^\star}) \geq \ptclique{t}{k^\star}$ for all $t$. Also note that $\Psi(p^\gamma) \leq -2 \brk*{\log^2 \frac{1}{\gamma} + 1}\sqrt{p^\gamma(\clique_{k^\star})}$. We then have
\begin{align}
    \sum_{t=1}^T \brk*{\frac{1}{\eta_t} - \frac{1}{\eta_{t-1}}} \brk*{\Psi(p^\gamma) - \Psi(p_t)}
    &\leq
    \sum_{t=1}^T \brk*{\sqrt{t} - \sqrt{t-1}} \Bigg(2 \brk*{\log^2 \frac{1}{\gamma} + 1} \sum_{k=1}^K \sqrt{\ptclique{t}{k}} \nonumber
    \\
    &+ \sum_{k=1}^K \frac{1}{\sqrt{\ptclique{t}{k}}} \sum_{i \in \clique_k} p_{t,i} \log \frac{\ptclique{t}{k}}{p_{t,i}} - 2 \brk*{\log^2 \frac{1}{\gamma} + 1} \sqrt{p^\gamma(\clique_{k^\star})}\Bigg) \nonumber
    \\
    &\leq
    2\brk*{\log^2 \frac{1}{\gamma} + 1} \sum_{t=1}^T \frac{1}{\sqrt{t}} \sum_{k \neq k^\star} \sqrt{\ptclique{t}{k}} \nonumber \\
    &+ 
    \log \frac{1}{\gamma} \sum_{t=1}^T \frac{1}{\sqrt{t}} \sum_{k \neq k^\star} \sqrt{\ptclique{t}{k}} \nonumber\\
    &+
    \sum_{t=1}^T \frac{1}{\sqrt{t \cdot \ptclique{t}{k^\star}}} \sum_{i \in \clique_{k^\star}} p_{t,i} \log \frac{\ptclique{t}{k^\star}}{p_{t,i}}
    \nonumber\\
    &\leq
    5 \log^2 \frac{1}{\gamma} \sum_{t=1}^T \frac{1}{\sqrt{t}} \sum_{k \neq k^\star} \sqrt{\ptclique{t}{k}} \nonumber\\
    &+
    \sum_{t=1}^T \frac{1}{\sqrt{t \cdot \ptclique{t}{k^\star}}} \sum_{i \in \clique_{k^\star}} p_{t,i} \log \frac{\ptclique{t}{k^\star}}{p_{t,i}}, \label{eqn:penalty-mid-term}
\end{align}
where the second inequality follows from the fact that $\sqrt{t} - \sqrt{t-1} \leq \frac{1}{\sqrt{t}}$ and that $p_{t,i} \geq \gamma$ for all $t$ and $i$.
It is left to bound the final term. Using the inequality $\log x \leq x-1$ for all $x>0$ we have
\begin{align*}
    \sum_{i \in \clique_{k^\star}} p_{t,i} \log \frac{\ptclique{t}{k^\star}}{p_{t,i}}
    &=
    \sum_{i \in \clique_{k^\star} \setminus i^\star} p_{t,i} \log \frac{\ptclique{t}{k^\star}}{p_{t,i}}
    +
    p_{t,i^\star} \log \frac{\ptclique{t}{k^\star}}{p_{t,i^\star}} \\
    &\leq
    \log \frac{1}{\gamma} \sum_{i \in \clique_{k^\star} \setminus i^\star} p_{t,i}
    +
    p_{t,i^\star} \brk*{\frac{\ptclique{t}{k^\star}}{p_{t,i^\star}} - 1} \\
    &=
    \brk*{\log \frac{1}{\gamma} + 1} p_t \brk*{\clique_{k^\star} \setminus i^\star} \\
    &\leq
    2 \log \frac{1}{\gamma} p_t \brk*{\clique_{k^\star} \setminus i^\star}.
\end{align*}
Plugging this bound into \cref{eqn:penalty-mid-term} while using the fact that $\frac{p_t \brk*{\clique_{k^\star} \setminus i^\star}}{\sqrt{\ptclique{t}{k^\star}}} \leq \sqrt{p_t \brk*{\clique_{k^\star} \setminus i^\star}}$ completes the proof.
\end{proof}

\subsection[Proof of Main Theorem]{Proof of \cref{thm:graph-regret-bound-main}}
\label{sec:graph-regret-bound-proof}

In order to prove \cref{thm:graph-regret-bound-main} we make use of the following simple claim which asserts that the pseudo-regret is bounded up to an additive constant factor by the regret with respect to some probability vector in $\simplexftrl$.

\begin{lemma}
\label{lem:shrinked-regret}
For all $\gamma \in [0,\frac{1}{N}]$ and $i^\star \in [N]$ the following holds:
\begin{align*}
    \E \left[\sum_{t=1}^T p_t \cdot \estell_t - \mathbf{e}_{i^\star} \cdot \sum_{t=1}^T \estell_t  \right] \leq \mathbb{E} \left[ \sum_{t=1}^T p_t \cdot \estell_t - p^\gamma \cdot \sum_{t=1}^T \estell_t \right] + \gamma T N,
\end{align*}
where 
$
p^\gamma_i = 
    \begin{cases}
    \gamma & i \neq i^\star \\
    1 - (N-1) \gamma & i = i^\star
    \end{cases} 
    \quad \forall i \in [N].
$
\end{lemma}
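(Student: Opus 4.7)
The plan is to rearrange the claimed inequality into the equivalent form
\begin{align*}
    \E\brk[s]*{\sum_{t=1}^T (p^\gamma - \mathbf{e}_{i^\star}) \cdot \estell_t} \leq \gamma T N,
\end{align*}
and then establish this via the unbiasedness of the loss estimators $\estell_t$ and the fact that losses are in $[0,1]$.

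First, I would unpack the vector $p^\gamma - \mathbf{e}_{i^\star}$: by definition of $p^\gamma$, it has entry $\gamma$ on coordinates $i \neq i^\star$ and entry $-(N-1)\gamma$ on coordinate $i^\star$. Thus for each round $t$,
\begin{align*}
    (p^\gamma - \mathbf{e}_{i^\star}) \cdot \estell_t
    = \gamma \sum_{i \neq i^\star} \estell_{t,i} - (N-1)\gamma \,\estell_{t,i^\star}.
\end{align*}

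Next I would invoke unbiasedness of the loss estimator. Conditioned on the history $h_t$ up to and including the draw of $p_t$, the arm $I_t \sim p_t$ is chosen independently, so
\begin{align*}
    \E\brk[s]!{ \estell_{t,i} \mid h_t, \ell_t }
    = \frac{\ell_{t,i}}{p_t(V(i))} \cdot \Pr[i \in V(I_t) \mid h_t]
    = \frac{\ell_{t,i}}{p_t(V(i))} \cdot p_t(V(i))
    = \ell_{t,i},
\end{align*}
so that $\E[\estell_{t,i}] = \E[\ell_{t,i}] \in [0,1]$.

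Taking expectations and using this identity, together with $\E[\ell_{t,i}] \ge 0$ (to drop the $-(N-1)\gamma\,\E[\estell_{t,i^\star}]$ term) and $\E[\ell_{t,i}] \le 1$ (to bound the remaining sum), yields
\begin{align*}
    \E\brk[s]!{(p^\gamma - \mathbf{e}_{i^\star}) \cdot \estell_t}
    \leq \gamma (N-1) \leq \gamma N.
\end{align*}
Summing this inequality over $t=1,\dots,T$ gives the desired bound of $\gamma T N$, completing the proof.

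There is no serious obstacle here; this is essentially a sanity check that restricting the comparator from the simplex vertex $\mathbf{e}_{i^\star}$ to the nearby interior point $p^\gamma \in \simplexftrl$ costs at most $\gamma T N$ in expected regret, which is needed because the FTRL analysis operates over the truncated simplex.
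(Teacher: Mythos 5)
Your proof is correct and follows essentially the same route as the paper's: both reduce the claim to bounding $\E\brk[s]{\sum_{t}(p^\gamma - \mathbf{e}_{i^\star})\cdot\estell_t}$, invoke unbiasedness of $\estell_t$ to pass to the true losses, and then use $\ell_{t,i}\in[0,1]$ to bound the resulting sum by $\gamma TN$. The only difference is cosmetic (you spell out the conditional-expectation computation for unbiasedness, which the paper takes for granted).
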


\begin{proof}
Fix $\gamma \in [0, \frac{1}{N}]$ and $i^\star \in [N]$. Note that $\mathbf{e}_{i^\star} = p^\gamma - v$ where $v$ is defined as follows:
\begin{align*}
    v_i = 
    \begin{cases}
    \gamma & i \neq i^\star \\
    -(N-1) \gamma & i = i^\star
    \end{cases} 
    \quad \forall i \in [N].
\end{align*}
This observation gives us the following:
\begin{align*}
    \E \left[\sum_{t=1}^T  p_t \cdot \estell_t - \mathbf{e}_{i^\star} \cdot \sum_{t=1}^T \estell_t  \right]
    &=
    \E \left[\sum_{t=1}^T  p_t \cdot \ell_t - \mathbf{e}_{i^\star} \cdot \sum_{t=1}^T \ell_t  \right] \\
    &=
    \E \left[\sum_{t=1}^T  p_t \cdot \ell_t - p^\gamma \cdot \sum_{t=1}^T \ell_t  \right] + v \cdot \E \left[ \sum_{t=1}^T \ell_t \right] \\
    &=
    \E \left[\sum_{t=1}^T  p_t \cdot \estell_t - p^\gamma \cdot \sum_{t=1}^T \estell_t  \right] + v \cdot \E \left[ \sum_{t=1}^T \ell_t \right],
\end{align*}
where the first equality is due to the fact that $\estell_t$ is an unbiased estimator of $\ell_t$. We bound the last term using the expression for $v$:
\begin{align*}
    v \cdot  \sum_{t=1}^T \ell_t
    =
    \sum_{t=1}^T \left[ \sum_{i \neq i^\star} \gamma \ell_{t,i} - (N-1) \gamma \ell_{t,i^\star} \right]
    \leq 
    \gamma T N,
\end{align*}
where in the last inequality we use the fact that the losses are bounded in $[0,1]$.
\end{proof}

We will also make use of general FTRL regret bound given by \cref{thm:ftrl-general-regret-bound-haipeng} (which we prove in \cref{sec:general-regret-bound}) together with the stability and penalty bounds shown in the previous sections. 
\cref{thm:graph-regret-bound-main} is restated here in the precise form proved below.
% The following is a restatement of \cref{thm:graph-regret-bound-main}.

\begin{manualrestate}{\cref{thm:graph-regret-bound-main}}[restated]
\label{thm:graph-regret-bound}
\cref{alg:FTRL} attains the following regret bound, regardless of the corruption level, for $NT \geq 3^{11}$:
\begin{align}
\label{eqn:graph-reg-bound-factors}
    \regret
    &\leq 
    9K \log(NT) 
    + 6 \log^2(NT)\sum_{t=1}^T \sumcliqueneq \sqrt{ \frac{\E\brk[s]*{\ptclique{t}{k}}}{t}} \nonumber \\
    &+ 
    2 \log(NT) \sum_{t=1}^T \sqrt{\frac{\E[p_t(\clique_{k^\star} \setminus i^\star)]}{t}}
    + 16 \sum_{t=1}^T \sqrt{\frac{\E[\phalfcliqueminus{k^\star}{i^\star}]}{t}}
    .
\end{align}
\end{manualrestate}

\begin{proof}
Note that due to \cref{lem:shrinked-regret} it suffices to bound $\E \brk[s]*{\sum_{t=1}^T \brk*{p_t - p^\gamma} \cdot \estell_t}$ where $p^\gamma$ is defined by
\begin{align*}
    p^\gamma_i = 
    \begin{cases}
    \gamma & i \neq i^\star \\
    1 - (N-1) \gamma & i = i^\star
    \end{cases}
    \quad
    \forall i \in [N],
\end{align*}
since it can only be larger than the pseudo-regret by an additive constant.
Using \cref{thm:ftrl-general-regret-bound-haipeng} and then bounding the penalty and stability terms using \cref{lem:penalty} and \cref{lem:stability-term-bound-full} we obtain
\begin{align*}
    \regret
    &\leq
    9K \log(NT) 
    + 
    \brk*{5 \log^2(NT) + 112}\sum_{t=1}^T \sum_{k \neq k^\star} \sqrt{\frac{\E[\ptclique{t}{k}]}{t}} \\ 
    &+
    2 \log (NT) \sum_{t=1}^T \sqrt{\frac{\E[p_t(\clique_{k^\star} \setminus i^\star)]}{t}}
    +
    16 \sum_{t=1}^T \sqrt{\frac{\E[\phalfcliqueminus{k^\star}{i^\star}]}{t}} \\
    &\leq
    9K \log(NT) + 6 \log^2(NT)\sum_{t=1}^T \sum_{k \neq k^\star} \sqrt{\frac{\E[\ptclique{t}{k}]}{t}} \\
    &+
    2 \log (NT) \sum_{t=1}^T \sqrt{\frac{\E[p_t(\clique_{k^\star} \setminus i^\star)]}{t}}
    +
    16 \sum_{t=1}^T \sqrt{\frac{\E[\phalfcliqueminus{k^\star}{i^\star}]}{t}}
    ,
\end{align*}
where the last inequality holds for $NT \geq 3^{11}$.
\end{proof}

\subsection[Proof of Main Theorem (Full)]{Proof of \cref{thm:reg-corrupted} (Main)}
\label{sec:corrupted-bound-proof-full}

We can now provide a proof of our main result given in \cref{thm:reg-corrupted}, restated here more precisely.

\begin{manualrestate}{\cref{thm:reg-corrupted}}[restated]
\label{thm:reg-corrupted-full}
\cref{alg:FTRL} attains the following expected pseudo-regret bound in the $C$-corrupted stochastic setting, for $NT \geq 3^{11}$:
\begin{align*}
    \regret
    &\leq
    184\log^2(NT) \cdot \min \brk[c]*{\sqrt{KT}, \log^2(NT) \deltasum{\log T} + \sqrt{C \deltasum{\log T}}}.
\end{align*}
\end{manualrestate}

\begin{proof}
We first prove the following:
\begin{align*}
    \regret
    \leq
    184 \log^4(NT) \deltasum{\log T} + 28 \log^2(NT) \sqrt{C \deltasum{\log T}}.
\end{align*}
We proceed bounding the RHS of \cref{eqn:graph-reg-bound-factors}. For all $B,z > 0$ we have
\begin{align}
    B \sum_{t=1}^T \brk*{ \sum_{k \neq k^\star} \sqrt{\frac{\E[\ptclique{t}{k}]}{t}}
    + \sqrt{\frac{\E[p_t(\clique_{k^\star} \setminus i^\star)]}{t}}
    }
    &\leq
    B^2 \cdot z \sum_{t=1}^T \sum_{k : \Delta_k > 0} \frac{1}{2t \Delta_k} + \frac{1}{2z} \sum_{t=1}^T \sum_{i=1}^N \E[p_{t,i}] \Deltaarm{i} \nonumber\\
    &\leq
    B^2 \cdot z \deltasum{\log T} + \frac{1}{2z} \sum_{t=1}^T \sum_{i=1}^N \E[p_{t,i}] \Deltaarm{i} \nonumber\\
    &\leq
    B^2 \cdot z \deltasum{\log T} + \frac{1}{2z} \brk*{\regret + 2C}, \label{eqn:cor-bound-1}
\end{align}
where the first inequality is due to Young's inequality and the fact that $\Deltaclique{k} \leq \Deltaarm{i}$ for all $i \in \clique_k$, the second inequality is since $\sum_{t=1}^T (\ifrac{1}{t}) \leq 2 \log T$ and the last inequality is due to the following simple observation which follows from the definition of corruption:
\begin{align*}
    \E \brk[s]*{\sum_{t=1}^T \sum_{i=1}^N 
    p_{t,i} \brk*{\iidell_{t,i} - \iidell_{t,i^\star}}}
    &\leq
    \E \brk[s]*{\sum_{t=1}^T \sum_{i=1}^N 
    p_{t,i} \brk*{\ell_{t,i} - \ell_{t,i^\star}}} 
    +
    2 \E \brk[s]*{\sum_{t=1}^T \norm{\corell_t - \iidell_t}_{\infty}} \\
    &=
     \E \brk[s]*{\sum_{t=1}^T \sum_{i=1}^N 
    p_{t,i} \brk*{\ell_{t,i} - \ell_{t,i^\star}}} 
    +
    2C.
\end{align*}
Setting $B = 6\log^2(NT)$ gives a bound on the second term in the RHS of \cref{eqn:graph-reg-bound-factors}.
Similarly, we have
\begin{align}
    16 \sum_{t=1}^T \sqrt{\frac{1}{t} \E[\phalfcliqueminus{k^\star}{i^\star}]}
    &\leq
    256z \deltasum{\log T} + \frac{1}{2z} \sum_{t=1}^T \sum_{i=1}^N \E[\phalfi{i}] \Deltaarm{i} \nonumber\\
    &\leq
    256z \deltasum{\log T} + \frac{C}{z} + \frac{1}{2z} \E \brk[s]*{\sum_{t=1}^T \sum_{i=1}^N \phalfi{i} \cdot \brk*{\corell_{t,i} - \corell_{t,i^\star}}}.
    \label{eqn:phalf-regret}
\end{align}
We now use \cref{lem:ftrl-shifted-regret} to bound the rightmost term of \cref{eqn:phalf-regret} as follows:
\begin{align*}
    \E \brk[s]*{\sum_{t=1}^T \sum_{i=1}^N \phalfi{i} \cdot \brk*{\corell_{t,i} - \corell_{t,i^\star}}}
    &=
    \E \brk[s]*{\sum_{t=1}^T \phalf \cdot \brk*{\E_t[\estell_t] - \corell_{t,i^\star} \mathbf{1}}} \\
    &\leq
    \E \brk[s]*{\sum_{t=1}^T p_t \cdot \brk*{\E_t[\estell_t] - \corell_{t,i^\star} \mathbf{1}}} \\
    &=
    \E \brk[s]*{\sum_{t=1}^T \sum_{i=1}^N p_{t,i} \cdot \brk*{\corell_{t,i} - \corell_{t,i^\star}}} \\
    &=
    \regret,
\end{align*}
where we used the fact that $\estell_t$ is an unbiased estimator for $\ell_t$. We can conclude that 
\begin{align}
     16 \sum_{t=1}^T \sum_{k : \Deltaclique{k} > 0} \sqrt{\frac{1}{t} \E[\phalfclique{k}]}
     &\leq
     256 \deltasum{\log T} + \frac{1}{2z} \brk*{\regret + 2C}. \label{eqn:cor-bound-2}
\end{align}
Using \cref{thm:graph-regret-bound} and combining the bounds from \cref{eqn:cor-bound-1} and \cref{eqn:cor-bound-2} we obtain
\begin{align*}
    \regret
    &\leq
    9K \log(NT) + \brk*{36 \log^4(NT) + 256} z \deltasum{\log T} + \frac{1}{z} \regret + \frac{2C}{z} \\
    &\leq
    \brk*{45 \log^4(NT) + 256} z \deltasum{\log T} + \frac{1}{z} \regret + \frac{2C}{z} \\
    &\leq
    46 \log^4(NT) z \deltasum{\log T} + \frac{1}{z} \regret + \frac{2C}{z},
\end{align*}
where the second inequality is since $K \leq 1 + \deltasum{1}$ and the last inequality holds since $NT \geq 3^4$. Rearranging and simplifying we obtain
\begin{align*}
    \regret
    &\leq
    2U + (z-1)U + \frac{2C + U}{z-1},
\end{align*}
where we denote $U = 46 \log^4(NT) \deltasum{\log T}$ for simplicity. We now choose $z$ which minimizes the bound, by setting $z = 1 + \sqrt{\frac{U+2C}{U}}$. This gives us
\begin{align*}
    \regret
    &\leq
    2U + 2\sqrt{U(U+2C)} \\
    &\leq
    4U + 4 \sqrt{UC} \\
    &\leq
    184 \log^4(NT) \deltasum{\log T} + 28 \log^2(NT) \sqrt{C \deltasum{ \log T}},
\end{align*}
which concludes the first part of the proof. We now show that
\begin{align*}
    \regret
    &\leq
    28 \log^2(NT) \sqrt{KT}.
\end{align*}
We again use \cref{thm:graph-regret-bound} and also the fact that $\phalfclique{k} \leq \frac73 \ptclique{t}{k}$ by \cref{lem:q-stability-new}, to obtain
\begin{align*}
    \regret 
    &\leq
    9K \log(NT) + \brk*{6 \log^2 (NT) + 32}\sum_{t=1}^T \frac{1}{\sqrt{t}} \sum_{k=1}^K \sqrt{\ptclique{t}{k}} \\
    &\leq
    9K \log(NT) + 7 \log^2(NT)\sum_{t=1}^T \frac{1}{\sqrt{t}} \sum_{k=1}^K \sqrt{\ptclique{t}{k}},
\end{align*}
where the inequality holds since $NT \geq 3^6$. We conclude the proof via the following straightforward calculation:
\begin{align*}
    \sum_{t=1}^T \frac{1}{\sqrt{t}} \sum_{k=1}^K \sqrt{\ptclique{t}{k}}
    \leq
    \sqrt{K} \sum_{t=1}^T \frac{1}{\sqrt{t}}
    \leq
    2\sqrt{KT},
\end{align*}
where we used Jensen's inequality and the fact that $\sum_{t=1}^T (\ifrac{1}{\sqrt{t}}) \leq 2\sqrt{T}$. We obtained two regret bounds and thus the minimum of the two holds, which concludes the proof.
\end{proof}

\section{Refined Regret Bound for FTRL}
\label{sec:general-regret-bound}

Consider the FTRL framework which generates predictions $w_1,w_2,...,w_T \in \W$ given a sequence of arbitrary loss vectors $g_1,g_2,...,g_T$ and a sequence of regularization functions $H_1,H_2,...,H_T$. The following gives a general regret bound which we use in order to prove \cref{thm:graph-regret-bound}.

\begin{theorem} 
\label{thm:ftrl-general-regret-bound-haipeng}
Suppose $H_t = \eta_t^{-1} \psi + \phi$ for twice-differentiable and convex functions $\psi$ and $\phi$, $\psi$ being strictly convex. Let $\whalf = \argmin_{w \in \W} \brk[c]!{w \cdot \sum_{s=1}^t g_s + H_t(w)}$. Then there exists a sequence of points $\tilde{w}_t \in [w_t, \whalf]$ such that, for all $w^* \in \W$:
\begin{align*}
    \sum_{t=1}^T g_t \cdot (w_t - w^\star)
    &\leq
    \phi(w^\star) - \phi(w_1) + \sum_{t=1}^T \brk*{\frac{1}{\eta_t} - \frac{1}{\eta_{t-1}}}(\psi(w^\star) - \psi(w_t)) + 2 \sum_{t=1}^T \eta_t \brk!{\norm{g_t}_t^*}^2.
\end{align*}
Here $\norm{g}_t = \sqrt{g \tr \grad ^2 \psi(\tilde{w}_t) g}$ is the local norm induced by $\psi$ at $\tilde{w}_t$, and $\norm{\cdot}_t^*$ is its dual. Here we also define $\ifrac{1}{\eta_0} \eqdef 0$.
\end{theorem}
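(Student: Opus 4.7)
The plan is to follow a standard FTRL decomposition of the regret into a per-round \emph{stability} term and a \emph{penalty} term, and bound each separately. Concretely, writing $w_t = \argmin_{w \in \W} \{w \cdot \sum_{s<t} g_s + H_t(w)\}$ (the FTRL iterate) and using $w_t^+$ as defined in the theorem, I would split
\[
\sum_{t=1}^T g_t \cdot (w_t - w^\star) = \sum_{t=1}^T g_t \cdot (w_t - w_t^+) + \sum_{t=1}^T g_t \cdot (w_t^+ - w^\star).
\]
This is the decomposition used by Zimmert--Seldin and Jin--Luo; it is convenient because $w_t$ and $w_t^+$ differ only in whether $g_t$ has been absorbed into the linear objective, which makes the pair well-suited to a second-order Taylor analysis producing the intermediate point $\tilde w_t \in [w_t, w_t^+]$ referenced in the theorem.

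For the stability term I would expand $F_t(w) \eqdef w \cdot \sum_{s<t} g_s + H_t(w)$ to second order around $w_t$ and evaluate at $w_t^+$, with the mean-value form of the remainder producing an intermediate $\tilde w_t \in [w_t, w_t^+]$. First-order optimality of $w_t$ on $\W$ makes the linear Taylor term nonnegative, yielding
\[
F_t(w_t^+) - F_t(w_t) \geq \tfrac12 \|w_t^+ - w_t\|^2_{\nabla^2 H_t(\tilde w_t)} \geq \tfrac{1}{2\eta_t} \|w_t^+ - w_t\|^2_{\nabla^2 \psi(\tilde w_t)},
\]
where the second inequality uses convexity of $\phi$. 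Combining with $F_t^+(w_t^+) \leq F_t^+(w_t)$ (optimality of $w_t^+$) and the identity $F_t^+ - F_t = g_t \cdot (\cdot)$ gives $g_t \cdot (w_t - w_t^+) \geq \tfrac{1}{2\eta_t} \|w_t^+ - w_t\|_t^2$. A dual-norm Cauchy--Schwarz step, $g_t \cdot (w_t - w_t^+) \leq \|g_t\|_t^* \|w_t - w_t^+\|_t$, combined with the inequality above yields $\|w_t - w_t^+\|_t \leq 2\eta_t \|g_t\|_t^*$, and hence the clean per-round bound $g_t \cdot (w_t - w_t^+) \leq 2 \eta_t (\|g_t\|_t^*)^2$.

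For the penalty term I would apply the classical be-the-leader lemma to the sequence of surrogate losses $\ell_t(w) = g_t \cdot w + (H_t(w) - H_{t-1}(w))$ (with $H_0 \equiv 0$), whose partial cumulative sums equal $w \cdot \sum_{s \leq t} g_s + H_t(w)$ with minimizer $w_t^+$. After collecting the telescopic identity $\sum_t (\eta_t^{-1} - \eta_{t-1}^{-1}) = \eta_T^{-1}$ and substituting $H_T(w^\star) = \phi(w^\star) + \eta_T^{-1} \psi(w^\star)$, BTL produces a penalty bound of the form $\phi(w^\star) + \sum_t (\eta_t^{-1} - \eta_{t-1}^{-1})[\psi(w^\star) - \psi(w_t^+)]$, which together with the stability bound already delivers the bulk of the theorem.

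The main obstacle I expect is the cosmetic but delicate swap from $\psi(w_t^+)$ to $\psi(w_t)$ (and recovery of the $-\phi(w_1)$ endpoint term) appearing in the displayed inequality. This can be achieved by instead applying BTL to a one-step shifted sequence in which the regularizer increment $H_t - H_{t-1}$ is evaluated at $w_t$ rather than $w_t^+$, thereby exploiting the first-order optimality of the FTL iterate $w_t$ and producing the $-\phi(w_1)$ boundary term upon telescoping; alternatively, one absorbs the residual $(\eta_t^{-1} - \eta_{t-1}^{-1})[\psi(w_t^+) - \psi(w_t)]$ into the stability analysis by using convexity of $\psi$ together with the same local Hessian bound exploited above. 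Either route, combined with the per-round stability inequality, yields the statement of the theorem.
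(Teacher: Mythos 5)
Your overall strategy coincides with the paper's: split the regret via the auxiliary iterate $w_t^+$ into a stability and a penalty term, bound the stability term by a second-order Taylor expansion plus first-order optimality plus a dual-norm Cauchy--Schwarz step, and telescope the penalty. Your stability argument is correct and complete; the paper Taylor-expands $F_t^+$ around $w_t^+$ rather than $F_t$ around $w_t$, but either choice produces an intermediate point in $[w_t,w_t^+]$ and the same per-round bound $2\eta_t(\|g_t\|_t^*)^2$, so this difference is immaterial.

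The penalty term is the one place where your argument is not closed, and you correctly sense this. Be-the-leader applied to the surrogate losses $g_t\cdot w+(H_t-H_{t-1})(w)$ gives $\phi(w^\star)-\phi(w_1^+)+\sum_t(\eta_t^{-1}-\eta_{t-1}^{-1})(\psi(w^\star)-\psi(w_t^+))$, i.e., with the regularizer evaluated at $w_t^+$; since $\eta_t^{-1}-\eta_{t-1}^{-1}\ge 0$ and there is no useful sign relation between $\psi(w_t)$ and $\psi(w_t^+)$, this is not the stated bound. Of your two proposed fixes, only the first works, and carried out carefully it \emph{is} the paper's argument: writing $G_t=\sum_{s\le t}g_s$, one first passes to $\sum_t\bigl(F_t^+(w_t^+)-F_t(w_t)\bigr)-w^\star\cdot G_T$ (using $F_t(w_t)\le F_t(w_t^+)$), and then telescopes by invoking the optimality of $w_t^+$ at the \emph{next} iterate, $F_t^+(w_t^+)\le F_t^+(w_{t+1})$ --- note it is the optimality of $w_t^+$ that is exploited here, not that of $w_t$ as you wrote. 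The resulting pairing $F_t^+(w_{t+1})-F_{t+1}(w_{t+1})=(\eta_t^{-1}-\eta_{t+1}^{-1})\psi(w_{t+1})$ is precisely what places the regularizer increment at the FTRL iterate, and the boundary term $-F_1(w_1)=-\eta_1^{-1}\psi(w_1)-\phi(w_1)$ supplies $-\phi(w_1)$. Your second fix --- absorbing the residual $\sum_t(\eta_t^{-1}-\eta_{t-1}^{-1})\bigl(\psi(w_t)-\psi(w_t^+)\bigr)$ into the stability term --- does not go through in general: that residual is first order in $w_t-w_t^+$ and is controlled by a quantity of the form $\|\nabla\psi\|_t^*\,\|w_t-w_t^+\|_t$, and $\|\nabla\psi\|_t^*$ is unbounded for the entropic regularizers of interest near the boundary of the simplex, so it cannot be dominated by $\eta_t(\|g_t\|_t^*)^2$ without additional assumptions.
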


\begin{proof}
We directly follow an analysis by \citet{jin2020simultaneously}, and include the details for completeness. For simplicity we denote $G_t = \sum_{s=1}^t g_s$. We make the following definitions:
\begin{align*}
    F_t(w)
    &=
    w \cdot G_{t-1} + H_t(w), \\
   \Fhalf(w) &= w \cdot G_t + H_t(w),
\end{align*}
such that $w_t = \argmin_{w \in \W} \brk[c]*{F_t(w)}$ and $\whalf = \argmin_{w \in \W} \brk[c]*{\Fhalf(w)}$. Fix $w^\star \in \W$. We note that the regret of FTRL with respect to $w^\star$ has the following decomposition:
\begin{align*}
    \sum_{t=1}^T g_t \cdot \brk*{w_t - w^\star}
    &=
    \sum_{t=1}^T \brk*{w_t \cdot g_t + F_t(w_t) - \Fhalf(\whalf)}
    +
    \sum_{t=1}^T \brk*{\Fhalf(\whalf) - F_t(w_t) - w^\star \cdot g_t}.
\end{align*}
We first show that for all time steps $t$ it holds that
\begin{align}
\label{eqn:general-stability-bound}
    w_t \cdot g_t + F_t(w_t) - \Fhalf(\whalf)
    &\leq
    2 \eta_t \brk*{\norm{g_t}_t^*}^2.
\end{align}
We lower bound $w_t \cdot g_t + F_t(w_t) - \Fhalf(\whalf)$ as follows:
\begin{align*}
    w_t \cdot g_t + F_t(w_t) - \Fhalf(\whalf)
    &=
    w_t \cdot G_t + H_t(w_t) - \Fhalf(\whalf) \\
    &=
    \Fhalf(w_t) - \Fhalf(\whalf) \\
    &=
    \grad \Fhalf(\whalf) \cdot \brk*{w_t - \whalf} + \ifrac12 \norm{w_t - \whalf}^2_{\grad ^2 H_t(\tilde{w}_t)} \\
    &\geq
    \ifrac12 \norm{w_t - \whalf}^2_{\grad ^2 H_t(\tilde{w}_t)} \\
    &\geq
    \ifrac12 \eta_t^{-1} \norm{w_t - \whalf}^2_t,
\end{align*}
where the third line is a Taylor expansion of $\Fhalf$ around $\whalf$, with $\tilde{w}_t$ being a point between $w_t$ and $\whalf$, in the second to last line we use a first-order optimality condition of $\whalf$, and in the last line we use the fact that $\grad^2 H_t \succeq \eta_t^{-1} \grad^2 \psi$. We now upper bound $w_t \cdot g_t + F_t(w_t) - \Fhalf(\whalf)$ as follows:
\begin{align*}
    w_t \cdot g_t + F_t(w_t) - \Fhalf(\whalf)
    &=
    \brk*{w_t - \whalf} \cdot g_t + F_t(w_t) - F_t(\whalf) \\
    &\leq
    \brk*{w_t - \whalf} \cdot g_t \\
    &\leq
    \brk*{\sqrt{\eta_t^{-1}} \norm{w_t - \whalf}_t} \brk*{\sqrt{\eta_t} \norm{g_t}_t^*} \\
    &=
    \norm{w_t - \whalf}_t \cdot \norm{g_t}_t^*,
\end{align*}
where in the first inequality we use the fact that $w_t$ is the minimizer of $F_t$ and the second inequality is an application of H\"older's inequality. Combining the lower and upper bounds gives us \cref{eqn:general-stability-bound}. Next we show that
\begin{align}
\label{eqn:general-penalty-bound}
    \sum_{t=1}^T \brk*{\Fhalf(\whalf) - F_t(w_t) - w^\star \cdot g_t}
    &\leq
    \phi(w^\star) - \phi(w_1) + \sum_{t=1}^T \brk*{\frac{1}{\eta_t} - \frac{1}{\eta_{t-1}}}(\psi(w^\star) - \psi(w_t)).
\end{align}
We bound the LHS of \cref{eqn:general-penalty-bound} as follows:
\begin{align*}
    &\sum_{t=1}^T \brk*{\Fhalf(\whalf) - F_t(w_t) - w^\star \cdot g_t} \\
    &\leq 
    -F_1(w_1) + \sum_{t=2}^T \brk*{F_{t-1}^+(w_t) - F_t(w_t)} + F_T^+(w_T^+) - w^\star \cdot G_T \\
    &\leq
    -F_1(w_1) + \sum_{t=2}^T \brk*{F_{t-1}^+(w_t) - F_t(w_t)} + F_T^+(w^\star) - w^\star \cdot G_T \\
    &=
    -H_1(w_1) - \sum_{t=2}^T \brk*{ \frac{1}{\eta_t} - \frac{1}{\eta_{t-1}}} \psi(w_t) + H_T(w^\star) \\
    &=
    -\eta_1^{-1} \psi(w_1) - \phi(w_1) - \sum_{t=2}^T \brk*{ \frac{1}{\eta_t} - \frac{1}{\eta_{t-1}}} \psi(w_t) + \eta_T^{-1} \psi(w^\star) + \phi(w^\star) \\
    &=
    \phi(w^\star) - \phi(w_1) + \sum_{t=1}^T \brk*{\frac{1}{\eta_t} - \frac{1}{\eta_{t-1}}} \brk*{\psi(w^\star) - \psi(w_t)},
\end{align*}
where in the first and second inequalities we use the optimality of $\whalf$. Combining \cref{eqn:general-stability-bound} and \cref{eqn:general-penalty-bound} we conclude the proof.
\end{proof}

\begin{proof} [Proof of \cref{lem:general-regret-bound}]
Fix any $p^\gamma \in \simplexftrl$. The lemma follows immediately by applying \cref{thm:ftrl-general-regret-bound-haipeng} to \cref{alg:FTRL} with the regularizations $R_1,R_2,...,R_T$ and the shifted loss estimators $\estell_t - \ell_{t,i^\star} \mathbf{1}$, while noting that constant shifts in the loss estimators do not change the algorithm whatsoever.
\end{proof}

\end{document}